\DeclareMathOperator*{\argmin}{arg\,min}
\def\ve#1{\mathchoice{\mbox{\boldmath$\displaystyle\bf#1$}}
{\mbox{\boldmath$\textstyle\bf#1$}}
{\mbox{\boldmath$\scriptstyle\bf#1$}}
{\mbox{\boldmath$\scriptscriptstyle\bf#1$}}}
\newcommand{\OMIT}[1]{}
\newcommand{\x}{{\ve x}}
\newcommand{\y}{{\ve y}}
\newcommand{\z}{{\ve z}}
\renewcommand{\v}{{\ve v}}
\newcommand{\g}{{\ve g}}
\newcommand{\e}{{\ve e}}
\renewcommand{\a}{{\ve a}}
\newcommand{\0}{{\ve 0}}
\newcommand{\m}{{\ve m}}
\renewcommand{\b}{{\ve b}}
\newcommand{\norm}[1]{\left\| {#1} \right\|}
\newcommand{\R}{\mathbb R}
\theoremstyle{definition}
\newtheorem{theorem}{Theorem}[section]
\newtheorem{lemma}[theorem]{Lemma}
\newtheorem{definition}{Definition}
\title{Convergence guarantees for RMSProp and ADAM in non-convex optimization and an empirical \\ comparison to Nesterov acceleration}
\author[1]{Soham De}
\author[2]{Anirbit Mukherjee}
\author[3]{Enayat Ullah}
\affil[1]{Department of Computer Science, University of Maryland}
\affil[2]{Department of Applied Mathematics and Statistics, Johns Hopkins University}
\affil[3]{Department of Computer Science, Johns Hopkins University}
\affil[ ]{\textit{sohamde@cs.umd.edu, amukhe14@jhu.edu, enayat@jhu.edu}}
\date{}
\begin{document}
\maketitle
\begin{abstract}
~\\
RMSProp and ADAM continue to be extremely popular algorithms for training neural nets but their theoretical convergence properties have remained unclear. Further, recent work has seemed to suggest that these algorithms have worse generalization properties when compared to carefully tuned stochastic gradient descent or its momentum variants. In this work, we make progress towards a deeper understanding of ADAM and RMSProp in two ways. First, we provide proofs that these adaptive gradient algorithms are guaranteed to reach criticality for smooth non-convex objectives, and we give bounds on the running time.
\newline \newline 
Next we design experiments to empirically study the convergence and generalization properties of RMSProp and ADAM against Nesterov's Accelerated Gradient method on a variety of common autoencoder setups and on VGG-9 with CIFAR-10. Through these experiments we demonstrate the interesting sensitivity that ADAM has to its momentum parameter $\beta_1$. We show that at very high values of the momentum parameter ($\beta_1 = 0.99$) 
ADAM outperforms a carefully tuned NAG on most of our experiments, in terms of getting lower training and test losses. On the other hand, NAG can sometimes do better when ADAM's $\beta_1$ is set to the most commonly used value: $\beta_1 = 0.9$, indicating the importance of tuning the hyperparameters of ADAM to get better generalization performance.
\newline \newline 
We also report experiments on different autoencoders to demonstrate that NAG has better abilities in terms of reducing the gradient norms, and it also produces iterates which exhibit an increasing trend for the minimum eigenvalue of the Hessian of the loss function at the iterates. 
\end{abstract}

\section {Introduction}
Many optimization questions arising in machine learning can be cast as a finite sum optimization problem of the form: $\min_\x f(\x)$ where $f(\x) = \frac{1}{k} \sum_{i=1}^k f_i(\x)$. Most neural network problems also fall under a similar structure where each function $f_i$ is typically non-convex. A well-studied algorithm to solve such problems is Stochastic Gradient Descent (SGD), which uses updates of the form: 
$\x_{t+1} := \x_t - \alpha \nabla \tilde f_{i_t} (\x_t)$, where $\alpha$ is a step size, and $\tilde f_{i_t}$ is a function chosen randomly from $\{f_1, f_2, \dots, f_k\}$ at time $t$.
%
Often in neural networks, ``momentum" is added to the SGD update to yield a two-step update process given as:
$\v_{t+1} = \mu \v_t - \alpha \nabla \tilde f_{i_t} (\x_t)$ followed by $\x_{t+1} = \x_t + \v_{t+1}$. This algorithm is typically called the Heavy-Ball (HB) method (or sometimes classical momentum), with $\mu > 0$ called the momentum parameter \cite{polyak1987introduction}.
In the context of neural nets, another variant of SGD that is popular is Nesterov's Accelerated Gradient (NAG), which can also be thought of as a momentum method \cite{sutskever2013importance}, and has updates of the form $\v_{t+1} = \mu \v_t - \alpha \nabla \tilde f_{i_t} (\x_t + \mu \v_t)$ followed by $\x_{t+1} = \x_t + \v_{t+1}$  (see Algorithm \ref{NAG_TF} for more details).

Momentum methods like HB and NAG have been shown to have superior convergence properties compared to gradient descent in the deterministic setting both for convex and non-convex functions \cite{nesterov1983method, polyak1987introduction, zavriev1993heavy, ochs2016local, o2017behavior, jin2017accelerated}. While (to the best of our knowledge) there is no clear theoretical justification in the stochastic case of the benefits of NAG and HB over regular SGD in general \cite{yuan2016influence,kidambi2018on, wiegerinck1994stochastic, orr1994momentum, yang2016unified, gadat2018stochastic}, unless considering specialized function classes \cite{loizou2017momentum}; in practice, these momentum methods, and in particular NAG, have been repeatedly shown to have good convergence and generalization on a range of neural net problems \cite{sutskever2013importance,lucas2018aggregated, kidambi2018on}.


The performance of NAG (as well as HB and SGD), however, are typically quite sensitive to the selection of its hyper-parameters: step size, momentum and batch size \cite{sutskever2013importance}. Thus,
``adaptive gradient" algorithms such as RMSProp (Algorithm \ref{RMSProp_TF}) \cite{tieleman2012lecture} and ADAM (Algorithm \ref{ADAM_TF}) \cite{kingma2014adam} have become very popular for optimizing deep neural networks 
\cite{melis2017state, xu2015show, denkowski2017stronger, gregor2015draw, radford2015unsupervised, bahar2017empirical, kiros2015skip}. The reason for their widespread popularity seems to be the fact that they are believed to be easier to tune than SGD, NAG or HB. Adaptive gradient methods use as their update direction a vector which is the image under a linear transformation (often called the ``diagonal pre-conditioner") constructed out of the history of the gradients, of a linear combination of all the gradients seen till now. It is generally believed that this ``pre-conditioning'' makes these algorithms much less sensitive to the selection of its hyper-parameters. A precursor to these RMSProp and ADAM was outlined in \cite{duchi2011adaptive}.

Despite their widespread use in neural net problems, adaptive gradients methods like RMSProp and ADAM lack theoretical justifications in the non-convex setting - even with exact/deterministic gradients \cite{bernstein2018signsgd}. Further, there are also important motivations to study the behavior of these algorithms in the deterministic setting because of usecases where the amount of noise is controlled during optimization, either by using larger batches \cite{martens2015optimizing, de2017automated, babanezhad2015stop} or by employing variance-reducing techniques \cite{johnson2013accelerating, defazio2014saga}.

Further, works like \cite{wilson2017marginal} and \cite{keskar2017improving} have shown cases where SGD (no momentum) and HB (classical momentum) generalize much better than RMSProp and ADAM with stochastic gradients. \cite{wilson2017marginal} also show that ADAM generalizes poorly for large enough nets and that RMSProp generalizes better than ADAM on a couple of neural network tasks (most notably in the character-level language modeling task). But in general it's not clear and no heuristics are known to the best of our knowledge to decide whether these insights about relative performances (generalization or training) between algorithms hold for other models or carry over to the full-batch setting.




\paragraph{A summary of our contributions}
In this work we try to shed some light on the above described open questions about adaptive gradient methods in the following two ways.

\begin{itemize}
    \item To the best of our knowledge, this work gives the first convergence guarantees for adaptive gradient based standard neural-net training heuristics. Specifically we show run-time bounds for deterministic RMSProp and ADAM to reach approximate criticality on smooth non-convex functions, as well as for stochastic RMSProp under an additional assumption.
    
    Recently, \cite{reddi2018convergence} have shown in the setting of online convex optimization that there are certain sequences of convex functions where ADAM and RMSprop fail to converge to asymptotically zero average regret. We contrast our findings with Theorem $3$ in \cite{reddi2018convergence}. Their counterexample for ADAM is constructed in the stochastic optimization framework and is incomparable to our result about deterministic ADAM. Our proof of convergence to approximate critical points establishes a key conceptual point that for adaptive gradient algorithms one cannot transfer intuitions about convergence from online setups to their more common use case in offline setups.
    
    \item Our second contribution is empirical investigation into adaptive gradient methods, where our goals are different from what our theoretical results are probing. We test the convergence and generalization properties of RMSProp and ADAM and we compare their performance against NAG on a variety of autoencoder experiments on MNIST data, in both full and mini-batch settings. In the full-batch setting, we demonstrate that ADAM with very high values of the momentum parameter ($\beta_1 = 0.99$) matches or outperforms carefully tuned NAG and RMSProp, in terms of getting lower training and test losses. We show that as the autoencoder size keeps increasing, RMSProp fails to generalize pretty soon. In the mini-batch experiments we see exactly the same behaviour for large enough nets. We further validate this behavior on an image classification task on CIFAR-10 using a VGG-9 convolutional neural network, the results to which we present in the Appendix  \ref{vgg9_sec}.
%


We note that recently it has been shown by \cite{lucas2018aggregated}, that there are problems where NAG generalizes better than ADAM even after tuning $\beta_1$.
In contrast our experiments reveal controlled setups where tuning ADAM's $\beta_1$ closer to $1$ than usual practice helps close the generalization gap with NAG and HB which exists at standard values of $\beta_1$. 
\\
\end{itemize}

\remark Much after this work was completed we came to know of a related paper \cite{li2018convergence} which analyzes convergence rates of a modification of AdaGrad (not RMSPRop or ADAM).
After the initial version of our work was made public, a few other analysis of adaptive gradient methods have also appeared like \cite{chen2018convergence}, \cite{zhou2018convergence} and \cite{zaheer2018adaptive}.  



\section{Notations and Pseudocodes}
\label{sec:pseudocodes}
Firstly we define the smoothness property that we assume in our proofs for all our non-convex objectives. This is a standard assumption used in the optimization literature.

\begin{definition}{{\bf $L-$smoothness}}\label{smooth}
If $f : \R^d \rightarrow \R$ is at least once differentiable then we call it $L-$smooth for some $L >0$ if for all $\x, \y \in \R^d$ the following inequality holds,
\[ \textstyle f(\y) \leq f(\x) + \langle \nabla f(\x), \y - \x \rangle + \frac {L}{2} \norm{\y - \x}^2. \]
\end{definition}

We need one more definition that of square-root of diagonal matrices, 

\begin{definition}{{\bf Square root of the Penrose inverse}}\label{penrose}
If $\v \in \R^d$ and $V = \textrm{diag}(\v)$ then we define, $V^{-\frac {1}{2}} := \sum_{i\in \textrm{Support}(\v)} \frac{1}{\sqrt{\v_i}} \e_i \e_i^T$, where $\{\e_i\}_{\{i=1,\ldots,d\}}$ is the standard basis of $\R^d$
\end{definition}

Now we list out the pseudocodes used for NAG, RMSProp and ADAM in theory and experiments, 
\newpage
\textbf{Nesterov's Accelerated Gradient (NAG) Algorithm}
\vspace{-3mm}
\begin{algorithm}[H]
\caption{NAG}
\label{NAG_TF}
\begin{algorithmic}[1]
{\tt 
\State {\bf Input :} A step size $\alpha$, momentum $\mu \in [0,1)$, and an initial starting point $\x_1 \in \mathbb{R}^d$, and we are given query access to a (possibly noisy) oracle for gradients of $f : \mathbb{R}^d \rightarrow \mathbb{R}$.
\Function{NAG}{$\x_1, \alpha, \mu$}
\State {\bf Initialize :} $\v_1 = \0$ 
\For{$t = 1, 2, \ldots$}
   \State $\v_{t+1} = \mu \v_t + \nabla f(\x_t)$
   \State $\x_{t+1} = \x_t - \alpha (\nabla f(\x_t) + \mu \v_{t+1})$
\EndFor
\EndFunction
}
\end{algorithmic}
\end{algorithm}

\textbf{RMSProp Algorithm}
\vspace{-3mm}
\begin{algorithm}[H]
\caption{RMSProp}
\label{RMSProp_TF}
\begin{algorithmic}[1]
{\tt 
\State {\bf Input :} A constant vector $\mathbb{R}^d \ni \xi\mathbf{1}_d \geq 0$, parameter $\beta_2 \in [0,1)$, step size $\alpha$, initial starting point $\x_1 \in \mathbb{R}^d$, and we are given query access to a (possibly noisy) oracle for gradients of $f : \mathbb{R}^d \rightarrow \mathbb{R}$.
\Function{RMSProp}{$\x_1, \beta_2, \alpha, \xi$}
\State {\bf Initialize :} $\v_0 = \0$  
\For{$t = 1, 2, \ldots$}
   \State $\g_t = \nabla f(\x_t)$
   \State $\v_t = \beta_2 \v_{t-1} + (1-\beta_2)(\g_t^2 + \xi\mathbf{1}_d )$
   \State $V_t = \text{diag}(\v_t)$
   \State $\x_{t+1} = \x_{t} - \alpha V_t^{-\frac 1 2} \g_t$
\EndFor
\EndFunction
}
\end{algorithmic}
\end{algorithm}

\textbf{ADAM Algorithm}
\vspace{-3mm}
\begin{algorithm}[H]
\caption{ADAM}
\label{ADAM_TF}
\begin{algorithmic}[1]
{\tt 
\State {\bf Input :} A constant vector $\mathbb{R}^d \ni \xi\mathbf{1}_d > 0$, parameters $\beta_1, \beta_2 \in [0,1)$, a sequence of step sizes $\{ \alpha_t\}_{t=1,2..}$, initial starting point $\x_1 \in \mathbb{R}^d$, and we are given oracle access to the gradients of $f : \mathbb{R}^d \rightarrow \mathbb{R}$.
\Function{ADAM}{$\x_1, \beta_1, \beta_2, \alpha, \xi$}
\State {\bf Initialize :} $\m_0 = \0$, $\v_0 = \0$  
\For{$t = 1, 2, \ldots$}
   \State $\g_t = \nabla f(\x_t)$
   \State $\m_t = \beta_1 \m_{t-1} + (1-\beta_1)\g_t$
   \State $\v_t = \beta_2 \v_{t-1} + (1-\beta_2)\g_t^2$
   \State $V_t = \text{diag}(\v_t)$
   \State $\x_{t+1} = \x_{t} - \alpha_t \Big( V_t^{\frac 1 2} + \text{diag} (\xi\mathbf{1}_d) \Big)^{-1} \m_t$
\EndFor
\EndFunction
}
\end{algorithmic}
\end{algorithm}

\section{Convergence Guarantees for ADAM and RMSProp}\label{sec:theory-statement}


Previously it has been shown in \cite{rangamani2017critical} that mini-batch RMSProp can off-the-shelf do autoencoding on depth $2$ autoencoders trained on MNIST data while similar results using non-adaptive gradient descent methods requires much tuning of the step-size schedule. Here we give the first result about convergence to criticality for stochastic RMSProp albeit under a certain technical assumption about the training set (and hence on the first order oracle). Towards that we need the following definition,  

\begin{definition}{{\bf The sign function}}
~\\ 
We define the function $\text{sign} : \R^d \rightarrow \{-1,1\}^d$ s.t it maps $\v \mapsto (1 \text{ if } \v_i \geq 0 \text{ else } -1)_{i=1,\ldots,d}$.
\end{definition}

\begin{theorem}{{\bf Stochastic RMSProp converges to criticality (Proof in subsection \ref{sec:supp_rmspropS})}}\label{thm:RMSPropS-proof}
Let $f : \R ^d \rightarrow \R$ be $L-$smooth and be  of the form $f = \frac {1}{k} \sum_{p=1}^k f_p$ s.t.~(a) each $f_i$ is at least once differentiable, $(b)$ the gradients are s.t $\forall \x \in \R^d, \forall p,q \in \{1,\ldots,k\}$, $\text{sign}(\nabla f_p(\x)) = \text{sign}(\nabla f_q(\x))$ , (c) $\sigma_f < \infty$ is an upperbound on the norm of the gradients of $f_i$ and (d) $f$ has a minimizer, i.e., there exists $\x_*$ such that $f(\x_*) = \min_{\x \in \R^d} f(\x)$. Let the gradient oracle be s.t when invoked at some $\x_t \in \R^d$ it uniformly at random picks $i_t \sim \{1,2,..,k\}$ and returns, $\nabla f_{i_t}(\x_t) = \g_t$.  Then corresponding to any $\epsilon, \xi >0$ and a starting point $\x_1$ for Algorithm \ref{RMSProp_TF}, we can define, $T \leq \frac{1}{\epsilon^4} \left ( \frac{2L\sigma_f^2 (\sigma_f^2 + \xi)\left (f(\x_{1}) -f(\x_*) \right ) }{(1-\beta_2)\xi} \right ) $ s.t.~we are guaranteed that the iterates of Algorithm \ref{RMSProp_TF} using a constant step-length of, $\alpha = \frac{1}{\sqrt{T}} \sqrt{\frac{2 \xi (1-\beta_2)\left (f(\x_{1}) -f(\x_*)\right )}{\sigma_f ^2 L}}$ will find an $\epsilon-$critical point in at most $T$ steps in the sense that, $\min_{t =1,2\ldots,T}\mathbb{E} [ \norm{\nabla f (\x_t)}^2] \leq \epsilon^2$.\qed 
\end{theorem}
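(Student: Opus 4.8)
The plan is to run the classical descent-lemma argument for non-convex first-order methods, with essentially all of the difficulty concentrated in correctly handling the conditional expectation of the first-order term, where the preconditioner $V_t$ is itself a function of the random gradient $\g_t$. First I would invoke $L$-smoothness (Definition \ref{smooth}) on consecutive iterates and substitute the RMSProp update $\x_{t+1} - \x_t = -\alpha V_t^{-\frac12}\g_t$, to obtain
\[
f(\x_{t+1}) \leq f(\x_t) - \alpha \langle \nabla f(\x_t), V_t^{-\frac12}\g_t\rangle + \frac{L\alpha^2}{2}\norm{V_t^{-\frac12}\g_t}^2 .
\]
I would then take the expectation conditioned on the history up to $\x_t$ (i.e.\ over the random draw $i_t$) and turn this into a genuine descent inequality for $f$.

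The heart of the proof is lower-bounding $\mathbb{E}_{i_t}[\langle \nabla f(\x_t), V_t^{-\frac12}\g_t\rangle]$. Written coordinate-wise this is $\sum_j [\nabla f(\x_t)]_j [\g_t]_j / \sqrt{[\v_t]_j}$, and here assumption (b) is exactly what I need: since every $\nabla f_p(\x_t)$ shares the same sign pattern, so does their average $\nabla f(\x_t)$, hence $[\nabla f(\x_t)]_j [\g_t]_j \geq 0$ for every realization of $i_t$ and every coordinate $j$. This nonnegativity is what lets me replace the $\g_t$-dependent denominator by a uniform deterministic bound: by a one-line induction using $\v_0=\0$ and $\norm{\g_t}^2 \le \sigma_f^2$, each coordinate satisfies $[\v_t]_j \le \sigma_f^2 + \xi$, so $\sqrt{[\v_t]_j} \le \sqrt{\sigma_f^2+\xi}$. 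Therefore the inner product is bounded below termwise (hence in expectation) by $\frac{1}{\sqrt{\sigma_f^2+\xi}}\langle \nabla f(\x_t), \g_t\rangle$, and since $\mathbb{E}_{i_t}[\g_t] = \nabla f(\x_t)$ the conditional expectation is at least $\norm{\nabla f(\x_t)}^2 / \sqrt{\sigma_f^2+\xi}$. For the second-order term I would use the lower bound $[\v_t]_j \ge (1-\beta_2)([\g_t]_j^2+\xi) \ge (1-\beta_2)\xi$ to get $\norm{V_t^{-\frac12}\g_t}^2 \le \norm{\g_t}^2/((1-\beta_2)\xi) \le \sigma_f^2/((1-\beta_2)\xi)$.

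With both terms controlled, the conditional descent inequality becomes
\[
\frac{\alpha}{\sqrt{\sigma_f^2+\xi}}\norm{\nabla f(\x_t)}^2 \le f(\x_t) - \mathbb{E}_{i_t}[f(\x_{t+1})] + \frac{L\alpha^2 \sigma_f^2}{2(1-\beta_2)\xi}.
\]
I would then take total expectations, telescope over $t=1,\dots,T$, drop $\mathbb{E}[f(\x_{T+1})] \ge f(\x_*)$ using assumption (d), and divide by $T$ to bound $\min_t \mathbb{E}[\norm{\nabla f(\x_t)}^2]$ by $\sqrt{\sigma_f^2+\xi}\big(\frac{f(\x_1)-f(\x_*)}{\alpha T} + \frac{L\alpha\sigma_f^2}{2(1-\beta_2)\xi}\big)$. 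Finally I would minimize this over $\alpha$ (the optimum of $A/(\alpha T)+B\alpha$ sits at $\alpha=\sqrt{A/(BT)}$), which reproduces exactly the stated constant step length; setting the resulting bound $\le \epsilon^2$ and solving for $T$ then recovers the claimed iteration count.

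I expect the only genuinely delicate step to be the first-order term: it is precisely where the preconditioner and the stochastic gradient are correlated, and it is the sole place where the sign assumption (b) enters. Every other step is routine smoothness-and-telescoping machinery; the $\sigma_f$ bound makes the $\v_t$-estimates immediate, and the final optimization in $\alpha$ is a one-variable calculation.
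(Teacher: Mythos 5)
Your proposal is correct, and its overall skeleton coincides with the paper's: the descent lemma, a lower bound on the conditional expectation of the preconditioned inner product, the worst-case bound $\norm{V_t^{-\frac12}\g_t}^2 \leq \sigma_f^2/((1-\beta_2)\xi)$, telescoping with $\mathbb{E}[f(\x_{T+1})] \geq f(\x_*)$, and the same one-variable optimization in $\alpha$, landing on exactly the stated step length and iteration count. The one place where you genuinely depart from the paper is the proof of the key estimate $\mathbb{E}[\langle\nabla f(\x_t), V_t^{-\frac12}\g_t\rangle \mid H_t] \geq \norm{\nabla f(\x_t)}^2/\sqrt{\sigma_f^2+\xi}$, which is the paper's Lemma \ref{lemma:ipbound}. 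The paper proves it by writing the conditional expectation out explicitly as an average over the $k$ possible draws, recasting it as $\frac{1}{k^2}\sum_i \a_i^\top (\mathbf{1}_k \mathbf{h}_i^\top)\a_i$, and then showing nonnegativity of $\sum_{p,q} a_{pi}a_{qi}\bigl[-\mu_{\min} + h_{pi}\bigr]$, where the sign hypothesis controls the cross terms $a_{pi}a_{qi}$ and a bound on $d_i$ gives $h_{pi} \geq \mu_{\min}$. You instead prove the stronger \emph{pathwise} statement that for every realization of $i_t$ one has $\langle\nabla f(\x_t), V_t^{-\frac12}\g_t\rangle \geq \mu_{\min}\langle\nabla f(\x_t),\g_t\rangle$ — valid because the sign condition makes every coordinate product $[\nabla f(\x_t)]_j[\g_t]_j$ nonnegative while $[\v_t]_j \leq \sigma_f^2+\xi$ holds deterministically — and only afterwards take expectations, using unbiasedness of $\g_t$. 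The two arguments consume identical ingredients (the sign assumption and the uniform bound on $\v_t$), but yours is shorter and cleanly decouples the two probabilistic facts: the correlation between $V_t$ and $\g_t$ is neutralized by a deterministic domination, and randomness enters only through $\mathbb{E}[\g_t \mid H_t] = \nabla f(\x_t)$ at the very end; the paper's version, by contrast, makes explicit where the cross terms $a_{pi}a_{qi}$ arise and hence precisely why the sign assumption cannot be dispensed with in this style of analysis.
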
 

\remark We note that the theorem above continues to hold even if the constraint $(b)$ that we have about the signs of the gradients of the $\{f_p\}_{p=1,\ldots,k}$ {\it only} holds on the points in $\R^d$ that the stochastic RMSProp visits and its not necessary for the constraint to be true everywhere in the domain. Further we can say in otherwords that this constraint ensures all the options for the gradient that this stochastic oracle has at any point, to lie in the same orthant of $\R^d$ though this orthant itself can change from one iterate of the next. (And the assumption also ensures that if for some coordinate $i$, $\nabla_i f = 0$ then for all $p \in \{1,\ldots,k\}$, $\nabla_i f_p =0$) A related result was concurrently shown by \cite{zaheer2018adaptive}.

Next we see that such sign conditions are not necessary to guarantee convergence of the deterministic RMSProp which corresponds to the full-batch RMSProp experiments in Section \ref{main:fullbatch}.

\begin{theorem}{\bf Convergence of deterministic RMSProp - the version with standard speeds (Proof in subsection \ref{sec:supp_rmsprop1})}\label{thm:RMSProp1-proof}
Let $f : \R ^d \rightarrow \R$ be $L-$smooth and let $\sigma < \infty$ be an upperbound on the norm of the gradient of $f$. Assume also that $f$ has a minimizer, i.e., there exists $\x_*$ such that $f(\x_*) = \min_{\x \in \R^d} f(\x)$. Then the following holds for Algorithm~\ref{RMSProp_TF} with a deterministic gradient oracle: 

For any $\epsilon, \xi >0$, using a constant step length of $\alpha_t = \alpha = \frac{(1-\beta_2)\xi}{L\sqrt{\sigma^2 + \xi}}$ for $t=1,2,...$, guarantees that $\norm{\nabla f(\x_{t})} \leq \epsilon$ for some $t \leq   \frac{1}{\epsilon^2}\times \frac{2L(\sigma^2 + \xi)(f(\x_1) -  f(\x_*))}{(1-\beta_2)\xi} $, where $\x_1$ is the first iterate of the algorithm.\qed  

\end{theorem}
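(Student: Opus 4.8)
The plan is to run the standard descent-lemma argument for smooth non-convex optimization, but carried out in the coordinatewise pre-conditioned geometry induced by $V_t$. First I would invoke Definition~\ref{smooth} with $\y = \x_{t+1}$ and $\x = \x_t$. Since the RMSProp update (line 8 of Algorithm~\ref{RMSProp_TF}) gives $\x_{t+1} - \x_t = -\alpha V_t^{-\frac12}\g_t$ with $\g_t = \nabla f(\x_t)$, this yields
\[ f(\x_{t+1}) \le f(\x_t) - \alpha\, \g_t^\top V_t^{-\frac12}\g_t + \frac{L\alpha^2}{2}\,\g_t^\top V_t^{-1}\g_t. \]
The first (linear) term is the gain and is negative; the second (quadratic) term is the penalty. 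The whole argument then reduces to controlling the diagonal entries of $V_t$ from above in the gain term and from below in the penalty term.

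Both bounds come straight from unrolling the recursion $\v_t = \beta_2\v_{t-1} + (1-\beta_2)(\g_t^2+\xi\mathbf{1}_d)$ with $\v_0 = \0$, namely $\v_{t} = (1-\beta_2)\sum_{j=1}^{t}\beta_2^{t-j}(\g_j^2+\xi\mathbf{1}_d)$ coordinatewise. For the gain term I would use the gradient-norm bound $\norm{\g_j} \le \sigma$ to get the \emph{upper} bound $(\v_t)_i \le (\sigma^2+\xi)(1-\beta_2^t) \le \sigma^2+\xi$; hence $1/\sqrt{(\v_t)_i} \ge 1/\sqrt{\sigma^2+\xi}$ and $\g_t^\top V_t^{-\frac12}\g_t \ge \norm{\g_t}^2/\sqrt{\sigma^2+\xi}$. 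For the penalty term I would keep only the $j=t$ summand to get the \emph{lower} bound $(\v_t)_i \ge (1-\beta_2)((\g_t)_i^2+\xi)$, but crucially divide by the $\xi$ part only, i.e. $\frac{(\g_t)_i^2}{(\v_t)_i} \le \frac{(\g_t)_i^2}{(1-\beta_2)\xi}$, so that summing gives $\g_t^\top V_t^{-1}\g_t \le \norm{\g_t}^2/((1-\beta_2)\xi)$. Substituting both bounds leaves a clean per-step inequality
\[ f(\x_{t+1}) \le f(\x_t) - \Big(\frac{\alpha}{\sqrt{\sigma^2+\xi}} - \frac{L\alpha^2}{2(1-\beta_2)\xi}\Big)\norm{\g_t}^2. \]

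Finally I would choose $\alpha$ to maximize the bracketed coefficient, which is a downward parabola in $\alpha$; its maximizer is exactly $\alpha = (1-\beta_2)\xi/(L\sqrt{\sigma^2+\xi})$, the step length in the statement, and at that value the coefficient equals $\gamma := (1-\beta_2)\xi/(2L(\sigma^2+\xi))$. Telescoping $f(\x_{t+1}) \le f(\x_t) - \gamma\norm{\g_t}^2$ over $t=1,\dots,T$ and using $f(\x_{T+1}) \ge f(\x_*)$ gives $\gamma\sum_{t=1}^T\norm{\g_t}^2 \le f(\x_1)-f(\x_*)$, so $\min_{t\le T}\norm{\g_t}^2 \le (f(\x_1)-f(\x_*))/(\gamma T)$; setting the right-hand side to $\epsilon^2$ and solving for $T$ reproduces the stated running time. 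I expect the only real subtlety to be the penalty-term bound: the naive lower bound $(\v_t)_i \ge (1-\beta_2)(\g_t)_i^2$ would give $\g_t^\top V_t^{-1}\g_t \le d/(1-\beta_2)$ and hence a bound that degrades with the dimension $d$; retaining the regularizer $\xi$ in the denominator (rather than the $(\g_t)_i^2$) is what removes the $d$ dependence and makes both terms share the common factor $\norm{\g_t}^2$, which is precisely what the optimization over $\alpha$ and the telescoping require.
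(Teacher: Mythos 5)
Your proposal is correct and follows essentially the same route as the paper's proof: the descent lemma, coordinatewise upper/lower bounds on $(\v_t)_i$ obtained by unrolling the recursion (upper bound $\sigma^2+\xi$ for the gain term, lower bound $(1-\beta_2)\xi$ for the penalty term), the same optimal step size, and the same telescoping, yielding the identical constant $\gamma = \frac{(1-\beta_2)\xi}{2L(\sigma^2+\xi)}$ and running-time bound. The only difference is presentational — the paper routes the same bounds through eigenvalue notation and auxiliary parameters $\delta_t^2$, $\beta_t$ before fixing them to the same worst-case values, whereas your direct derivation is arguably cleaner.
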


One might wonder if the $\xi$ parameter introduced  in the algorithms above is necessary to get convergence guarantees for RMSProp. Towards that in the following theorem we show convergence of another variant of deterministic RMSProp which does not use the $\xi$ parameter and instead uses other assumptions on the objective function and step size modulation. But these tweaks to eliminate the need of $\xi$ come at the cost of the convergence rates getting weaker.

\begin{theorem}{\bf Convergence of deterministic RMSProp - the version with no $\xi$ shift (Proof in subsection \ref{sec:supp_rmsprop2})}\label{thm:RMSProp2-proof}
 Let $f : \R ^d \rightarrow \R$ be $L-$smooth and let $\sigma < \infty$ be an upperbound on the norm of the gradient of $f$. Assume also that $f$ has a minimizer, i.e., there exists $\x_*$ such that $f(\x_*) = \min_{\x \in \R^d} f(\x)$, and the function $f$ be bounded from above and below by constants $B_\ell$ and $B_u$ as $B_l \leq f(\x) \leq B_u$ for all $\x \in \R^d$. Then for any $\epsilon >0$, $\exists T = {O} (\frac{1}{\epsilon^4})$ s.t.~the Algorithm \ref{RMSProp_TF} with a deterministic gradient oracle and $\xi =0$ is guaranteed to reach a $t$-th iterate s.t.~$1 \leq t \leq T$ and $\norm{ \nabla f(\x_{t}) } \leq \epsilon$.\qed 
\end{theorem}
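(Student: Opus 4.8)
The plan is to run the standard $L$-smoothness descent argument specialized to the RMSProp update with $\xi=0$, where the real work is to show that, even though the preconditioner $V_t^{-1/2}$ is no longer uniformly bounded away from singularity, the two quadratic forms appearing in the descent inequality can still be controlled purely from the recursion for $\v_t$ and the gradient bound $\sigma$. First I would unroll the recursion (using $\v_0=\0$) to write $\v_t=(1-\beta_2)\sum_{j=1}^t \beta_2^{t-j}\g_j^2$ (componentwise square) and record the only two per-coordinate facts I will need: the bound $\norm{\g_j}\le\sigma$ forces $(\v_t)_i\le\sigma^2$ for every $i$ (upper bound), while keeping only the most recent summand gives $(\v_t)_i\ge(1-\beta_2)(\g_t)_i^2$ (lower bound).

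Next I would apply $L$-smoothness (Definition~\ref{smooth}) to the update $\x_{t+1}-\x_t=-\alpha V_t^{-\frac12}\g_t$ to get
\[
f(\x_{t+1}) \le f(\x_t) - \alpha\,\langle \g_t, V_t^{-\frac12}\g_t\rangle + \frac{L\alpha^2}{2}\,\norm{V_t^{-\frac12}\g_t}^2 .
\]
The first-order term is bounded below using the upper bound on $\v_t$: every coordinate with $(\g_t)_i\neq 0$ lies in the support of $\v_t$, so by the Penrose-inverse convention (Definition~\ref{penrose}) the zero-gradient coordinates contribute nothing and $\langle \g_t, V_t^{-\frac12}\g_t\rangle=\sum_i (\g_t)_i^2/\sqrt{(\v_t)_i}\ge \norm{\g_t}^2/\sigma$. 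The curvature term is bounded above using the lower bound on $\v_t$: on any coordinate with $(\g_t)_i\neq 0$ we have $(\g_t)_i^2/(\v_t)_i\le 1/(1-\beta_2)$, so summing over at most $d$ coordinates gives $\norm{V_t^{-\frac12}\g_t}^2\le d/(1-\beta_2)$.

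The essential phenomenon, and the reason the rate degrades relative to Theorem~\ref{thm:RMSProp1-proof}, is that this last bound is a \emph{constant} independent of $\g_t$: without the $\xi$ shift the lower bound on $\v_t$ is proportional to $\g_t^2$, which cancels the numerator and prevents folding the curvature penalty back into the descent term $\norm{\g_t}^2/\sigma$. The descent inequality therefore reads $f(\x_{t+1})\le f(\x_t)-\frac{\alpha}{\sigma}\norm{\g_t}^2+\frac{L\alpha^2 d}{2(1-\beta_2)}$, with an irreducible additive error each step. Summing over $t=1,\dots,T$, telescoping the left-hand side, and bounding $f(\x_1)-f(\x_{T+1})\le B_u-B_\ell$ via the assumed boundedness (the minimizer assumption would serve equally well), I obtain
\[
\min_{1\le t\le T}\norm{\g_t}^2 \;\le\; \frac{\sigma(B_u-B_\ell)}{\alpha T} + \frac{\sigma L \alpha d}{2(1-\beta_2)} .
\]

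The main obstacle is purely this final balancing. Since neither term vanishes on its own, one must trade a $1/(\alpha T)$ contribution against an $\alpha$ contribution, which forces the choice $\alpha=\Theta(1/\sqrt{T})$ and makes the right-hand side $\Theta(1/\sqrt{T})$ rather than $\Theta(1/T)$. To drive $\min_t\norm{\g_t}^2$ below $\epsilon^2$ one then needs $1/\sqrt{T}=O(\epsilon^2)$, i.e.\ $T=O(1/\epsilon^4)$, which is exactly the advertised complexity and the stated price of eliminating $\xi$. The only remaining care is bookkeeping around the Penrose inverse on coordinates where $(\v_t)_i=0$, but these necessarily have $(\g_t)_i=0$ as well, so they drop out of both the descent term and the curvature term and cause no loss.
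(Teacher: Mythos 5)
Your proof is correct, and its two core estimates are exactly the ones the paper relies on: the upper bound $(\v_t)_i \le \sigma^2$ (giving $\langle \g_t, V_t^{-1/2}\g_t\rangle \ge \norm{\g_t}^2/\sigma$) and the single-term lower bound $(\v_t)_i \ge (1-\beta_2)(\g_t)_i^2$ (giving $\norm{V_t^{-1/2}\g_t}^2 \le d/(1-\beta_2)$, which is precisely the paper's bound on its potential increments $P(t)-P(t-1)$). Where you genuinely diverge is the step-size schedule and the resulting bookkeeping. The paper runs a horizon-free decreasing schedule $\alpha_t = \alpha/\sqrt{t}$; the price is that $\sum_t \frac{1}{\alpha_t}\bigl(f(\x_t)-f(\x_{t+1})\bigr)$ no longer telescopes, so it must be handled by Abel summation, and this is the \emph{only} place the two-sided boundedness $B_\ell \le f \le B_u$ is actually used (the terms $f(\x_{t+1})(1/\alpha_{t+1}-1/\alpha_t)$ need the upper bound, and $-f(\x_{T+1})/\alpha_{T+1}$ needs the lower bound); it also needs the potential function $P(T)$ to resum the curvature terms against $\alpha_t$. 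Your constant, horizon-dependent choice $\alpha = \Theta(1/\sqrt{T})$ collapses all of this: the descent terms telescope trivially, the curvature penalty is just $T$ copies of a per-step constant, and you only need $f(\x_1)-f(\x_{T+1}) \le B_u - B_\ell$ (or, as you note, just the minimizer assumption), so your argument exposes that the two-sided boundedness hypothesis is an artifact of the paper's anytime schedule rather than something intrinsic to removing $\xi$. The trade-off is the usual one: the paper's iterates and step sizes are defined without reference to a target horizon or accuracy, whereas your $T$ must be fixed in advance to set $\alpha$; both yield the advertised $T = O(1/\epsilon^4)$ with constants depending on $d$, $\sigma$, $L$ and $1-\beta_2$. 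Your handling of the Penrose-inverse support issue (coordinates with $(\v_t)_i=0$ force $(\g_t)_i=0$ and drop out of both quadratic forms) is also sound and is a point the paper glosses over.
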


In Section \ref{main:fullbatch} we show results of our experiments with full-batch ADAM. Towards that, we analyze deterministic ADAM albeit in the small $\beta_1$ regime. We note that a small $\beta_1$ does not cut-off contributions to the update direction from gradients in the arbitrarily far past (which are typically significantly large), and neither does it affect the non-triviality of the pre-conditioner which does not depend on $\beta_1$ at all. 

\begin{theorem}{{\bf Deterministic ADAM converges to criticality (Proof in subsection \ref{sec:supp_adam})}}\label{thm:ADAM-proof}
Let $f : \R ^d \rightarrow \R$ be $L-$smooth and let $\sigma < \infty$ be an upperbound on the norm of the gradient of $f$. Assume also that $f$ has a minimizer, i.e., there exists $\x_*$ such that $f(\x_*) = \min_{\x \in \R^d} f(\x)$. Then the following holds for Algorithm~\ref{ADAM_TF}:

For any $\epsilon >0$, $\beta_1 < \frac{\epsilon}{\epsilon +  \sigma}$ and $\xi >   \frac{\sigma^2 \beta_1}{- \beta_1\sigma + \epsilon(1-\beta_1)}$, there exist step sizes $\alpha_t > 0$, $t = 1, 2, \ldots$ and a natural number $T$ (depending on $\beta_1, \xi$) such that $\norm{\nabla f(\x_t)} \leq \epsilon$ for some $t \leq T$.

In particular if one sets $\beta_1 = \frac{\epsilon}{\epsilon + 2\sigma}$, $\xi = 2\sigma$, and $\alpha_t = \frac{\norm{\g_t} ^2}{L(1-\beta_1^t)^2}\frac{4\epsilon}{3(\epsilon + 2\sigma)^2}$ where $\g_t$ is the gradient of the objective at the $t^{th}$ iterate, then $T$ can be taken to be $\frac{9L \sigma ^2}{\epsilon^6   }[f(\x_2) - f(\x_*)]$, where $\x_2$ is the second iterate of the algorithm. \qed 
\end{theorem}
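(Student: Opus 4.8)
The plan is to run the textbook descent-lemma argument for smooth non-convex optimization, the only twist being that the step direction is the pre-conditioned momentum $-\alpha_t A_t \m_t$, where $A_t$ is the diagonal matrix with entries $(A_t)_{ii} = (\sqrt{(\v_t)_i} + \xi)^{-1}$. Applying the $L$-smoothness inequality of Definition \ref{smooth} along the displacement $\x_{t+1} - \x_t = -\alpha_t A_t \m_t$ of Algorithm \ref{ADAM_TF} gives
\[ f(\x_{t+1}) \leq f(\x_t) - \alpha_t \langle \g_t, A_t \m_t \rangle + \frac{L}{2}\alpha_t^2 \norm{A_t \m_t}^2, \]
so the whole proof reduces to (i) showing the linear term $\langle \g_t, A_t \m_t\rangle$ is strictly positive, and quantitatively large, whenever $\norm{\g_t} > \epsilon$, and (ii) upper bounding the quadratic term $\norm{A_t\m_t}^2$. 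For the pre-conditioner, since $\sigma$ bounds every gradient norm, each coordinate obeys $\sqrt{(\v_t)_i} \leq \sigma$, which yields the two-sided control $\frac{1}{\sigma + \xi} \leq (A_t)_{ii} \leq \frac{1}{\xi}$.

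The crux is the linear term. I would write $\m_t = (1-\beta_1)\g_t + \beta_1 \m_{t-1}$ and split
\[ \langle \g_t, A_t \m_t \rangle = (1-\beta_1)\langle \g_t, A_t \g_t \rangle + \beta_1 \langle \g_t, A_t \m_{t-1} \rangle. \]
The first term is at least $(1-\beta_1)\norm{\g_t}^2 / (\sigma + \xi)$. The second term is the only one that can be negative; using $(A_t)_{ii} \leq 1/\xi$, Cauchy--Schwarz, and the geometric-series bound $\norm{\m_{t-1}} \leq \sigma$, it is at least $-\beta_1 \sigma \norm{\g_t}/\xi$. Thus $\langle \g_t, A_t \m_t\rangle \geq \norm{\g_t}\bigl[(1-\beta_1)\norm{\g_t}/(\sigma+\xi) - \beta_1 \sigma/\xi\bigr]$, and demanding that the bracket be positive precisely when $\norm{\g_t} > \epsilon$ is exactly what forces the two hypotheses $\beta_1 < \epsilon/(\epsilon + \sigma)$ and $\xi > \sigma^2 \beta_1 / (\epsilon(1-\beta_1) - \beta_1 \sigma)$. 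Since the quadratic term is finite, any sufficiently small positive step size then produces a strict decrease; because $f$ is bounded below by $f(\x_*)$, only finitely many iterates can have $\norm{\g_t} > \epsilon$, which already establishes the qualitative first half of the statement.

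For the explicit rate I would sharpen both estimates. Using $\norm{\g_t} > \epsilon$ once more inside the bracket turns the linear bound into one proportional to $\norm{\g_t}^2$, and $\norm{A_t\m_t}^2 \leq \norm{\m_t}^2/\xi^2 \leq (1-\beta_1^t)^2 \sigma^2/\xi^2$ (again a geometric-series bound on $\m_t$); this last factor is the origin of the $(1-\beta_1^t)^2$ in the prescribed $\alpha_t$. Substituting $\beta_1 = \epsilon/(\epsilon + 2\sigma)$ and $\xi = 2\sigma$ collapses these to $\langle \g_t, A_t\m_t\rangle \geq \norm{\g_t}^2 / (6(\epsilon + 2\sigma))$ and $\norm{A_t\m_t}^2 \leq (1-\beta_1^t)^2/4$. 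The stated step size is then exactly the value of $\alpha_t$ that balances the two terms of the quadratic-in-$\alpha_t$ decrease bound, and plugging it back in, together with $\norm{\g_t}^4 > \epsilon^4$ and $(1-\beta_1^t)^2 \leq 1$, lower bounds the per-step decrease of $f$ by a positive constant. Telescoping this decrease from the second iterate and comparing against $f(\x_2) - f(\x_*)$ then caps the number of iterates with $\norm{\g_t} > \epsilon$, giving the claimed $T = O(L\sigma^2 [f(\x_2)-f(\x_*)]/\epsilon^6)$.

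The main obstacle is controlling the momentum cross-term $\beta_1 \langle \g_t, A_t \m_{t-1}\rangle$: since $\m_{t-1}$ is a running average of past gradients that may point away from the current $\g_t$, the pre-conditioned momentum is not a priori a descent direction, and for large $\beta_1$ it genuinely need not be. Everything hinges on keeping $\beta_1$ small and $\xi$ large enough that the current-gradient term $(1-\beta_1)\langle \g_t, A_t \g_t\rangle$ dominates the cross-term; the stated $(\beta_1, \xi)$ threshold is precisely the boundary of this domination, and the positivity of the descent, the explicit step size, and the final rate are all quantitative refinements of this one comparison.
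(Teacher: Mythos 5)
Your proposal is correct and follows essentially the same route as the paper's proof: the descent lemma along the ADAM update, the two-sided eigenvalue bounds $\frac{1}{\sigma+\xi}\leq (A_t)_{ii}\leq\frac{1}{\xi}$ on the pre-conditioner, showing that the current-gradient term dominates the momentum cross-term exactly under the stated $(\beta_1,\xi)$ thresholds, then choosing the step size that optimizes the resulting quadratic and telescoping the guaranteed decrease against $f(\x_2)-f(\x_*)$. The only differences are bookkeeping — you split $\m_t=(1-\beta_1)\g_t+\beta_1\m_{t-1}$ and apply Cauchy--Schwarz with $\norm{\m_{t-1}}\leq\sigma$ where the paper unrolls the full telescoped sequence $Q_i=\langle\g_t,A_t\m_i\rangle$ (yielding the same bound), you argue directly rather than by contradiction, and the stated $\alpha_t$ is in fact the balancing step for the paper's slightly weaker lower bound on $\langle\g_t,A_t\m_t\rangle$ rather than for your sharper one (harmless, since the smaller step still gives the required per-step decrease).
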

Our motivations towards the above theorem were primarily rooted in trying to understand the situations where ADAM can converge at all (given the negative results about ADAM as in \cite{reddi2018convergence}). But we point out that it remains open to tighten the analysis of deterministic ADAM and obtain faster rates than what we have shown in the theorem above. 

\remark It is sometimes believed that ADAM gains over RMSProp because of its ``bias correction term" which refers to the step length of ADAM having an iteration dependence of the following form, $\sqrt{1 - \beta_2^t} / (1 - \beta_1^t)$. In the above theorem, we note that the $1/(1 - \beta_1^t)$ term of this ``bias correction term" naturally comes out from theory! 



\section{Experimental setup}
\label{sec:exp}
For testing the empirical performance of ADAM and RMSProp, we perform experiments on fully connected autoencoders using ReLU activations and shared weights and on CIFAR-10 using VGG-9, a convolutional neural network. Here we give a description of the autoencoder architecture and the experiment on VGG-9 has been described in Appendix \ref{vgg9_sec}. 
~\\ \\
Let $\z \in \mathbb{R}^d$ be the input vector to the autoencoder, $\{W_i\}_{i=1,..,\ell}$ denote the weight matrices of the net and $\{ \b_i \}_{i=1,..,2\ell}$ be the bias vectors. Then the output $\hat{\z} \in \mathbb{R}^d$ of the autoencoder is defined as $\hat{\z} =  W_1^\top \sigma(\dots \sigma(W_{\ell-1}^\top\sigma(W_\ell^\top \a + \b_{\ell+1}) + \b_{\ell+2}) \dots) + \b_{2\ell}$ where $\a = \sigma(W_\ell \sigma(\dots \sigma(W_2 \sigma(W_1\z + \b_1) + \b_2) \dots) + \b_\ell)$.
This defines an autoencoder with $2\ell - 1$ hidden layers using $\ell$ weight matrices and $2\ell$ bias vectors. Thus, the parameters of this model are given by $\x = [\text{vec}(W_1)^\top \,...\,\text{vec}(W_\ell)^\top\,\b_1^\top \,...\,\b_{2\ell}^\top]^\top$ (where we imagine all vectors to be column vectors by default). The loss function, for an input $\z$ is then given by: $f(\z ; \x) = \| \z - \hat{\z} \|^2$. 

Such autoencoders are a fairly standard setup that have been used in previous work \cite{arpit2015regularized, baldi2012autoencoders, kuchaiev2017training, vincent2010stacked}. There have been relatively fewer comparisons of ADAM and RMSProp with other methods on a regression setting.  We were motivated by \cite{rangamani2017critical} who had undertaken a theoretical analysis of autoencoders and in their experiments had found RMSProp to have good reconstruction error for MNIST when used on even just $2$ layer ReLU autoencoders. 



To keep our experiments as controlled as possible, we make all layers in a network have the same width (which we denote as $h$). Thus, given a size $d$ for the input image, the weight matrices (as defined above) are given by: $W_1 \in \mathbb{R}^{h \times d}$, $W_i \in \mathbb{R}^{h \times h}, i = 2, \dots, \ell$.
This allowed us to study the effect of increasing depth $\ell$ or width $h$ without having to deal with added confounding factors. For all experiments, we use the standard ``Glorot initialization" for the weights \cite{glorot2010understanding}, where each element in the weight matrix is initialized by sampling from a uniform distribution with $[-\text{limit}, \text{limit}]$, $\text{limit} = \sqrt{6/(\text{fan}_{\text{in}} + \text{fan}_{\text{out}})}$, where $\text{fan}_{\text{in}}$ denotes the number of input units in the weight matrix, and $\text{fan}_{\text{out}}$ denotes the number of output units in the weight matrix. All bias vectors were initialized to zero. No regularization was used. 

We performed autoencoder experiments on the MNIST dataset for various network sizes (i.e., different values of $\ell$ and $h$). We implemented all experiments using TensorFlow \cite{abadi2016tensorflow} using an NVIDIA GeForce GTX 1080 Ti graphics card. We compared the performance of ADAM and RMSProp with Nesterov's Accelerated Gradient (NAG). All experiments were run for $10^5$ iterations. We tune over the hyper-parameters for each optimization algorithm using a grid search as described in the Appendix (Section \ref{supp_sec:exp_details}). To pick the best set of hyper-parameters, we choose the ones corresponding to the lowest loss on the training set at the end of $10^5$ iterations. Further, to cut down on the computation time so that we can test a number of different neural net architectures, we crop the MNIST image from $28\times 28$ down to a $22\times 22$ image by removing 3 pixels from each side (almost all of which is whitespace).

\paragraph{Full-batch experiments}
 We are interested in first comparing these algorithms in the full-batch setting. To do this in a computationally feasible way, we consider a subset of the MNIST dataset (we call this: mini-MNIST), which we build by extracting the first 5500 images in the training set and first 1000 images in the test set in MNIST. Thus, the training and testing datasets in mini-MNIST is 10\% of the size of the  MNIST dataset. Thus the training set in mini-MNIST contains 5500 images, while the test set contains 1000 images. This subset of the dataset is a fairly reasonable approximation of the full MNIST dataset (i.e., contains roughly the same distribution of labels as in the full MNIST dataset), and thus a legitimate dataset to optimize on.

\paragraph{Mini-batch experiments}
To test if our conclusions on the full-batch case extend to the mini-batch case, we then perform the same experiments in a mini-batch setup where we fix the mini-batch size at 100. For the mini-batch experiment, we consider the full training set of MNIST, instead of the mini-MNIST dataset considered for the full-batch experiments and we also test on CIFAR-10 using VGG-9, a convolutional neural network.


\section {Experimental Results}

\subsection{RMSProp and ADAM are sensitive to choice of $\xi$}
\label{sec:main_xi}

The $\xi$ parameter is a feature of the default implementations of RMSProp and ADAM such  as in TensorFlow. Most interestingly this strictly positive parameter is crucial for our proofs. In this section we present experimental evidence that attempts to clarify that this isn't merely a theoretical artefact but its value indeed has visible effect on the behaviours of these algorithms. 
We see in Figure \ref{xi_best} that on increasing the value of this fixed shift parameter $\xi$, ADAM in particular, is strongly helped towards getting lower gradient norms and lower test losses though it can hurt its ability to get lower training losses. The plots are shown for optimally tuned values for the other hyper-parameters. 

\begin{figure*}[h]
\centering
\begin{subfigure}[t]{0.32\textwidth}
\includegraphics[width=\textwidth]{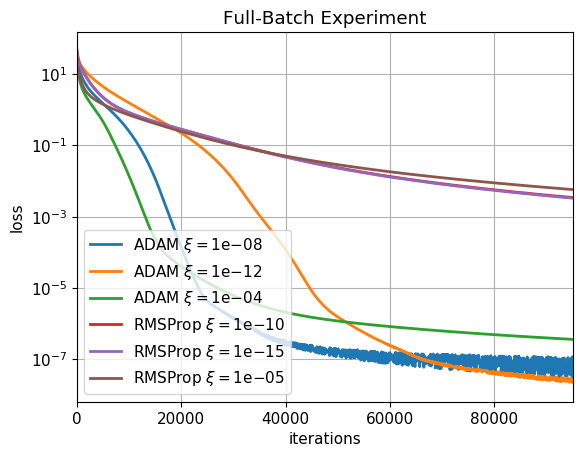}
\end{subfigure}
\begin{subfigure}[t]{0.32\textwidth}
\includegraphics[width=\textwidth]{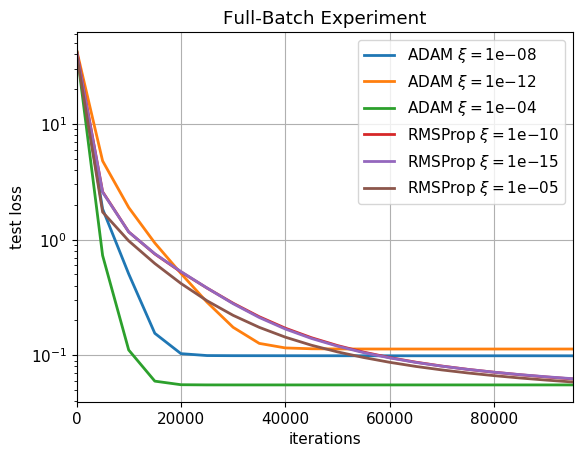}
\end{subfigure}
\begin{subfigure}[t]{0.32\textwidth}
\includegraphics[width=\textwidth]{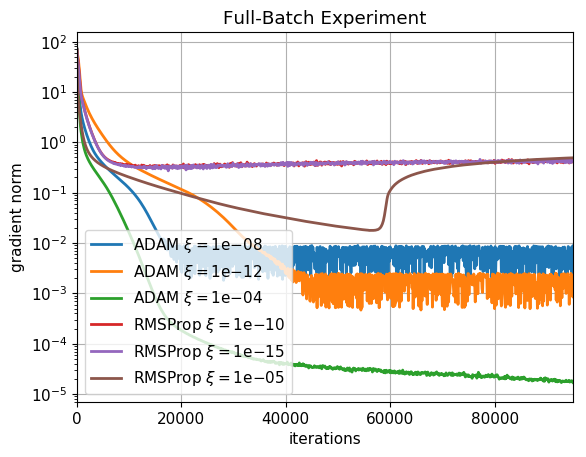}
\end{subfigure}
\caption{Optimally tuned parameters for different $\xi$ values. 1 hidden layer network of 1000 nodes; \emph{Left}: Loss on training set; \emph{Middle}: Loss on test set; \emph{Right}: Gradient norm on training set}
\label{xi_best}
\end{figure*}


\subsection{Tracking $\lambda_{min}(\text{Hessian})$ of the loss function}


To check whether NAG, ADAM or RMSProp is capable of consistently moving from a ``bad" saddle point to a ``good" saddle point region, we track the most negative eigenvalue of the Hessian $\lambda_{\min}(Hessian)$.
Even for a very small neural network with around $10^5$ parameters, it is still intractable to store the full Hessian matrix in memory to compute the eigenvalues. Instead, we use the Scipy library function \texttt{scipy.sparse.linalg.eigsh} that can use a function that computes the matrix-vector products to compute the eigenvalues of the matrix \cite{lehoucq1998arpack}. Thus, for finding the eigenvalues of the Hessian, it is sufficient to be able to do Hessian-vector products. 
This can be done exactly in a fairly efficient way \cite{hvp}.


We display a representative plot in Figure \ref{min_eig} which shows that NAG in particular has a distinct ability to gradually, but consistently, keep increasing the minimum eigenvalue of the Hessian while continuing to decrease the gradient norm.  However unlike as in deeper autoencoders in this case the gradient norms are consistently bigger for NAG, compared to RMSProp and ADAM. In contrast, RSMProp and ADAM quickly get to a high value of the minimum eigenvalue and a small gradient norm, but somewhat stagnate there. In short, the trend looks better for NAG, but in actual numbers RMSProp and ADAM do better.


\begin{figure*}[h!]
\centering
\begin{subfigure}[t]{0.32\textwidth}
\includegraphics[width=\textwidth]{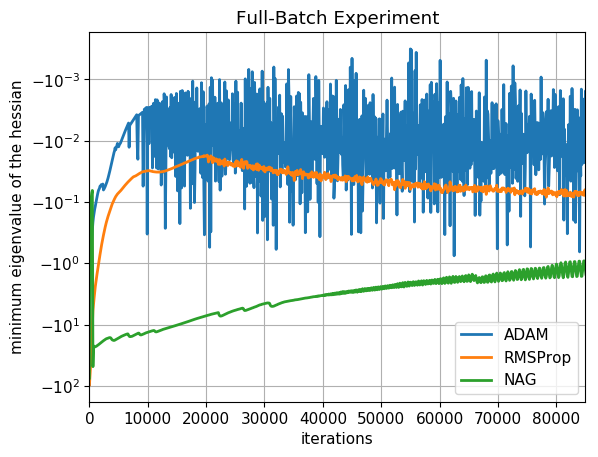}
\end{subfigure}
\begin{subfigure}[t]{0.32\textwidth}
\includegraphics[width=\textwidth]{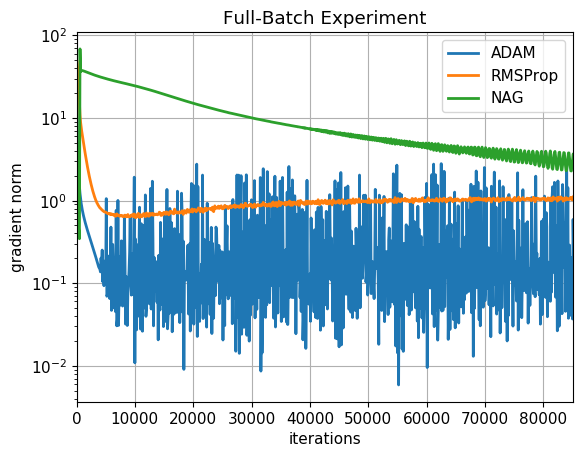}
\end{subfigure}
\caption{Tracking the smallest eigenvalue of the Hessian on a 1 hidden layer network of size 300. \emph{Left}: Minimum Hessian eigenvalue. \emph{Right}: Gradient norm on training set.}
\label{min_eig}
\end{figure*}

\subsection{Comparing performance in the full-batch setting}\label{main:fullbatch}

In Figure \ref{full_batch_3_1000}, we show how the training loss, test loss and gradient norms vary through the iterations for RMSProp, ADAM 
(at $\beta_1 = 0.9$ and $0.99$)
and NAG (at $\mu = 0.9$ and $0.99$)
on a $3$ hidden layer autoencoder with $1000$ nodes in each hidden layer trained on mini-MNIST. Appendix \ref{sec:supp_fullbatch} and \ref{sec:supp_varydim} have more such comparisons for various neural net architectures with varying depth and width and input image sizes, where the following qualitative results also extend.

\begin{figure*}[tb]
\centering
\begin{subfigure}[t]{0.32\textwidth}
\includegraphics[width=\textwidth]{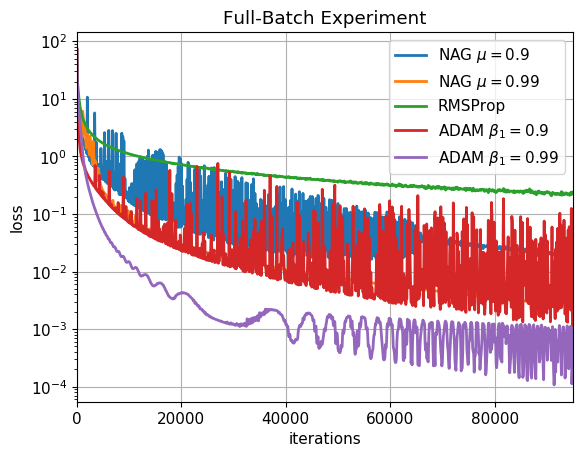}
\end{subfigure}
\begin{subfigure}[t]{0.32\textwidth}
\includegraphics[width=\textwidth]{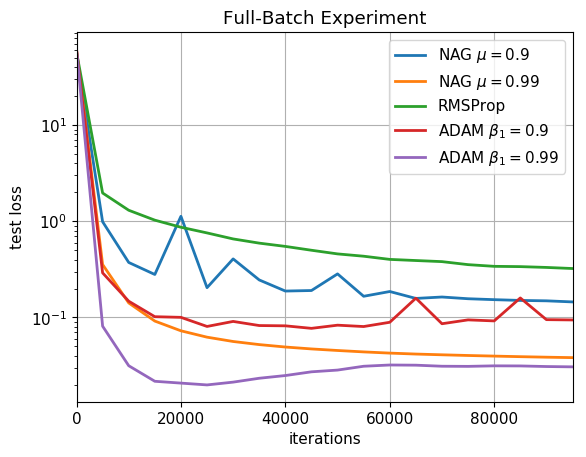}
\end{subfigure}
\begin{subfigure}[t]{0.32\textwidth}
\includegraphics[width=\textwidth]{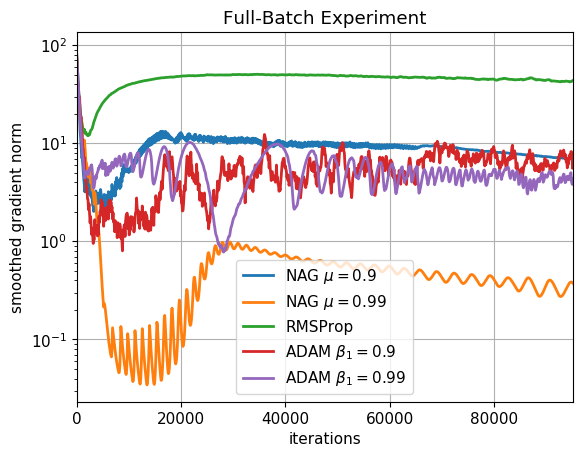}
\end{subfigure}
\caption{Full-batch experiments on a 3 hidden layer network with 1000 nodes in each layer; \emph{Left}: Loss on training set; \emph{Middle}: Loss on test set; \emph{Right}: Gradient norm on training set}
\label{full_batch_3_1000}
\end{figure*}

\paragraph{Conclusions from the full-batch experiments of training autoencoders on mini-MNIST:}
\begin{itemize}
\item 
Pushing $\beta_1$ closer to $1$ significantly helps ADAM in getting lower training and test losses and at these values of $\beta_1$, it has better performance on these metrics than all the other algorithms.
One sees cases like the one displayed in Figure~\ref{full_batch_3_1000} where ADAM at $\beta_1 =0.9$ was getting comparable or slightly worse test and training errors than NAG. But once $\beta_1$ gets closer to $1$, ADAM's performance sharply improves and gets better than other algorithms.

\item Increasing momentum helps NAG get lower gradient norms though on larger nets it might hurt its training or test performance. NAG does seem to get the lowest gradient norms compared to the other algorithms, except for single hidden layer networks like in Figure~\ref{min_eig}.
\end{itemize}


\begin{figure*}[tb!]
\centering
\begin{subfigure}[t]{0.32\textwidth}
\includegraphics[width=\textwidth]{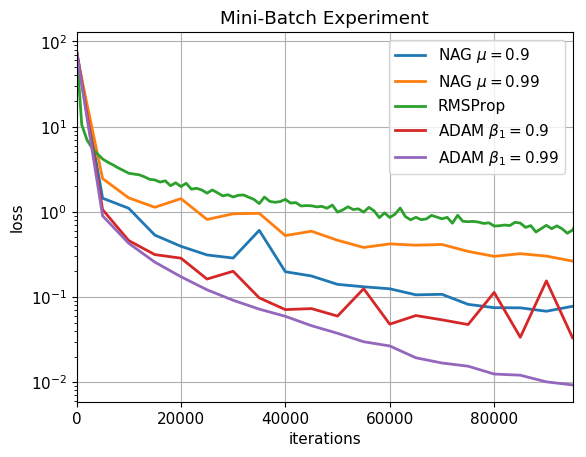}
\end{subfigure}
\begin{subfigure}[t]{0.32\textwidth}
\includegraphics[width=\textwidth]{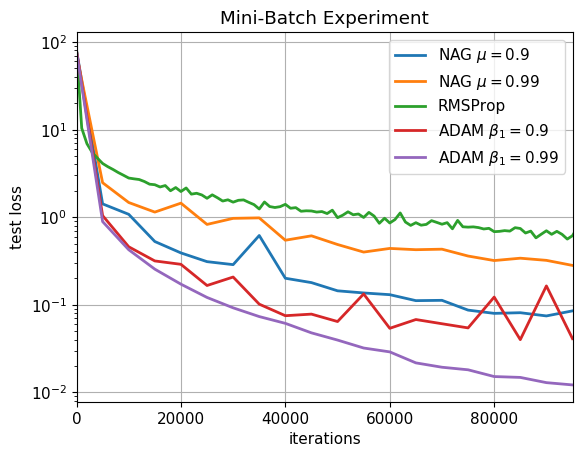}
\end{subfigure}
\begin{subfigure}[t]{0.32\textwidth}
\includegraphics[width=\textwidth]{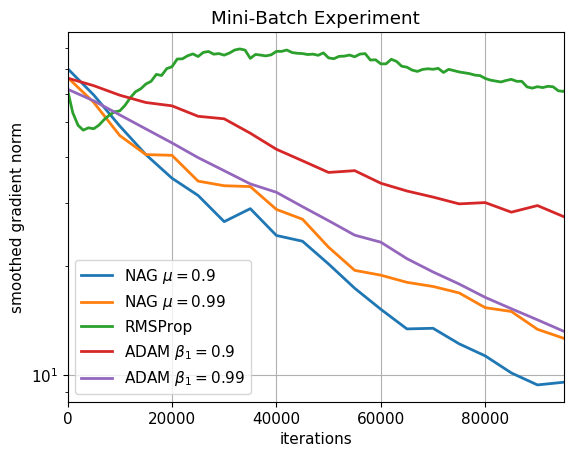}
\end{subfigure}
\caption{Mini-batch experiments on a network with 5 hidden layers of 1000 nodes each; \emph{Left}: Loss on training set; \emph{Middle}: Loss on test set; \emph{Right}: Gradient norm on training set}
\label{mini_batch_5_1000}
\end{figure*}

\subsection{Corroborating the full-batch behaviors in the mini-batch setting}

In Figure \ref{mini_batch_5_1000}, we show how training loss, test loss and gradient norms vary when using mini-batches of size 100, 
on a $5$ hidden layer autoencoder with $1000$ nodes in each hidden layer trained on the full MNIST dataset. The same phenomenon as here has been demonstrated in more such mini-batch comparisons on autoencoder architectures with varying depths and widths in Appendix \ref{sec:supp_minibatch} and on VGG-9 with CIFAR-10 in Appendix \ref{vgg9_sec}. 

\paragraph{Conclusions from the mini-batch experiments of training autoencoders on full MNIST dataset:}
\begin{itemize}
\item Mini-batching does seem to help NAG do better than ADAM on small nets. However, for larger nets, 
the full-batch behavior continues, i.e., when ADAM's momentum parameter $\beta_1$ is pushed closer to $1$, it gets better generalization (significantly lower test losses) than NAG at any momentum tested. 
\item In general, for all metrics (test loss, training loss and gradient norm reduction) both ADAM as well as NAG seem to improve in performance when their momentum parameter ($\mu$ for NAG and $\beta_1$ for ADAM) is pushed closer to $1$. This effect, which was present in the full-batch setting, seems to get more pronounced here.
\item As in the full-batch experiments, NAG continues to have the best ability to reduce gradient norms while for larger enough nets, ADAM at large momentum continues to have the best 
training error.
\end{itemize}

\section{Conclusion}
To the best of our knowledge, we present the first theoretical guarantees of convergence to criticality for the immensely popular algorithms RMSProp and ADAM in their most commonly used setting of optimizing a non-convex objective. 
 
By our experiments, we have sought to shed light on the important topic of the interplay between adaptivity and momentum in training nets. By choosing to study textbook autoencoder architectures  where various parameters of the net can be changed controllably we  highlight the following two aspects that (a) the value of the gradient shifting hyperparameter $\xi$ has a significant influence on the performance of ADAM and RMSProp and (b) ADAM seems to perform particularly well (often supersedes Nesterov accelerated gradient method) when its momentum parameter $\beta_1$ is very close to $1$. On VGG-9 with CIFAR-10 and for the task of training autoencoders on MNIST we have verified these conclusions across different widths and depths of nets as well as in the full-batch and the mini-batch setting (with large nets) and under compression of the input/out image size. Curiously enough, this regime of $\beta_1$ being close to $1$ is currently not within the reach of our proof techniques of showing convergence for ADAM. Our experiments give strong reasons to try to advance theory in this direction in future work.

\section{Acknowledgements}
We gratefully acknowledge the helpful discussions with Amitabh Basu, Sashank Reddi, Pushpendre Rastogi and Mufan Li about the results in this paper. 

\bibliography{iclr2019_conference}

\begin{thebibliography}{10}

\bibitem{abadi2016tensorflow}
Mart{\'\i}n Abadi, Paul Barham, Jianmin Chen, Zhifeng Chen, Andy Davis, Jeffrey
  Dean, Matthieu Devin, Sanjay Ghemawat, Geoffrey Irving, Michael Isard, et~al.
\newblock Tensorflow: A system for large-scale machine learning.
\newblock In {\em OSDI}, volume~16, pages 265--283, 2016.

\bibitem{arpit2015regularized}
Devansh Arpit, Yingbo Zhou, Hung Ngo, and Venu Govindaraju.
\newblock Why regularized auto-encoders learn sparse representation?
\newblock {\em arXiv preprint arXiv:1505.05561}, 2015.

\bibitem{babanezhad2015stop}
Reza Babanezhad, Mohamed~Osama Ahmed, Alim Virani, Mark Schmidt, Jakub Konecny,
  and Scott Sallinen.
\newblock Stop wasting my gradients: Practical svrg.
\newblock {\em arXiv preprint arXiv:1511.01942}, 2015.

\bibitem{bahar2017empirical}
Parnia Bahar, Tamer Alkhouli, Jan-Thorsten Peter, Christopher Jan-Steffen Brix,
  and Hermann Ney.
\newblock Empirical investigation of optimization algorithms in neural machine
  translation.
\newblock {\em The Prague Bulletin of Mathematical Linguistics}, 108(1):13--25,
  2017.

\bibitem{baldi2012autoencoders}
Pierre Baldi.
\newblock Autoencoders, unsupervised learning, and deep architectures.
\newblock In {\em Proceedings of ICML Workshop on Unsupervised and Transfer
  Learning}, pages 37--49, 2012.

\bibitem{bernstein2018signsgd}
Jeremy Bernstein, Yu-Xiang Wang, Kamyar Azizzadenesheli, and Anima Anandkumar.
\newblock signsgd: compressed optimisation for non-convex problems.
\newblock {\em arXiv preprint arXiv:1802.04434}, 2018.

\bibitem{chen2018convergence}
Xiangyi Chen, Sijia Liu, Ruoyu Sun, and Mingyi Hong.
\newblock On the convergence of a class of adam-type algorithms for non-convex
  optimization.
\newblock {\em arXiv preprint arXiv:1808.02941}, 2018.

\bibitem{de2017automated}
Soham De, Abhay Yadav, David Jacobs, and Tom Goldstein.
\newblock Automated inference with adaptive batches.
\newblock In {\em Artificial Intelligence and Statistics}, pages 1504--1513,
  2017.

\bibitem{defazio2014saga}
Aaron Defazio, Francis Bach, and Simon Lacoste-Julien.
\newblock Saga: A fast incremental gradient method with support for
  non-strongly convex composite objectives.
\newblock In {\em Advances in neural information processing systems}, pages
  1646--1654, 2014.

\bibitem{denkowski2017stronger}
Michael Denkowski and Graham Neubig.
\newblock Stronger baselines for trustable results in neural machine
  translation.
\newblock {\em arXiv preprint arXiv:1706.09733}, 2017.

\bibitem{duchi2011adaptive}
John Duchi, Elad Hazan, and Yoram Singer.
\newblock Adaptive subgradient methods for online learning and stochastic
  optimization.
\newblock {\em Journal of Machine Learning Research}, 12(Jul):2121--2159, 2011.

\bibitem{gadat2018stochastic}
S{\'e}bastien Gadat, Fabien Panloup, Sofiane Saadane, et~al.
\newblock Stochastic heavy ball.
\newblock {\em Electronic Journal of Statistics}, 12(1):461--529, 2018.

\bibitem{glorot2010understanding}
Xavier Glorot and Yoshua Bengio.
\newblock Understanding the difficulty of training deep feedforward neural
  networks.
\newblock In {\em Proceedings of the thirteenth international conference on
  artificial intelligence and statistics}, pages 249--256, 2010.

\bibitem{gregor2015draw}
Karol Gregor, Ivo Danihelka, Alex Graves, Danilo~Jimenez Rezende, and Daan
  Wierstra.
\newblock Draw: A recurrent neural network for image generation.
\newblock {\em arXiv preprint arXiv:1502.04623}, 2015.

\bibitem{ioffe2015batch}
Sergey Ioffe and Christian Szegedy.
\newblock Batch normalization: Accelerating deep network training by reducing
  internal covariate shift.
\newblock {\em arXiv preprint arXiv:1502.03167}, 2015.

\bibitem{jin2017accelerated}
Chi Jin, Praneeth Netrapalli, and Michael~I Jordan.
\newblock Accelerated gradient descent escapes saddle points faster than
  gradient descent.
\newblock {\em arXiv preprint arXiv:1711.10456}, 2017.

\bibitem{johnson2013accelerating}
Rie Johnson and Tong Zhang.
\newblock Accelerating stochastic gradient descent using predictive variance
  reduction.
\newblock In {\em Advances in neural information processing systems}, pages
  315--323, 2013.

\bibitem{keskar2017improving}
Nitish~Shirish Keskar and Richard Socher.
\newblock Improving generalization performance by switching from adam to sgd.
\newblock {\em arXiv preprint arXiv:1712.07628}, 2017.

\bibitem{kidambi2018on}
Rahul Kidambi, Praneeth Netrapalli, Prateek Jain, and Sham~M. Kakade.
\newblock On the insufficiency of existing momentum schemes for stochastic
  optimization.
\newblock In {\em International Conference on Learning Representations}, 2018.

\bibitem{kingma2014adam}
Diederik~P Kingma and Jimmy Ba.
\newblock Adam: A method for stochastic optimization. arxiv. org, 2014.

\bibitem{kiros2015skip}
Ryan Kiros, Yukun Zhu, Ruslan~R Salakhutdinov, Richard Zemel, Raquel Urtasun,
  Antonio Torralba, and Sanja Fidler.
\newblock Skip-thought vectors.
\newblock In {\em Advances in neural information processing systems}, pages
  3294--3302, 2015.

\bibitem{kuchaiev2017training}
Oleksii Kuchaiev and Boris Ginsburg.
\newblock Training deep autoencoders for collaborative filtering.
\newblock {\em arXiv preprint arXiv:1708.01715}, 2017.

\bibitem{lehoucq1998arpack}
Richard~B Lehoucq, Danny~C Sorensen, and Chao Yang.
\newblock {\em ARPACK users' guide: solution of large-scale eigenvalue problems
  with implicitly restarted Arnoldi methods}, volume~6.
\newblock Siam, 1998.

\bibitem{li2018convergence}
Xiaoyu Li and Francesco Orabona.
\newblock On the convergence of stochastic gradient descent with adaptive
  stepsizes.
\newblock {\em arXiv preprint arXiv:1805.08114}, 2018.

\bibitem{loizou2017momentum}
Nicolas Loizou and Peter Richt{\'a}rik.
\newblock Momentum and stochastic momentum for stochastic gradient, newton,
  proximal point and subspace descent methods.
\newblock {\em arXiv preprint arXiv:1712.09677}, 2017.

\bibitem{lucas2018aggregated}
James Lucas, Richard Zemel, and Roger Grosse.
\newblock Aggregated momentum: Stability through passive damping.
\newblock {\em arXiv preprint arXiv:1804.00325}, 2018.

\bibitem{martens2015optimizing}
James Martens and Roger Grosse.
\newblock Optimizing neural networks with kronecker-factored approximate
  curvature.
\newblock In {\em International conference on machine learning}, pages
  2408--2417, 2015.

\bibitem{melis2017state}
G{\'a}bor Melis, Chris Dyer, and Phil Blunsom.
\newblock On the state of the art of evaluation in neural language models.
\newblock {\em arXiv preprint arXiv:1707.05589}, 2017.

\bibitem{nesterov1983method}
Yurii Nesterov.
\newblock A method of solving a convex programming problem with convergence
  rate o (1/k2).
\newblock In {\em Soviet Mathematics Doklady}, volume~27, pages 372--376, 1983.

\bibitem{ochs2016local}
Peter Ochs.
\newblock Local convergence of the heavy-ball method and ipiano for non-convex
  optimization.
\newblock {\em arXiv preprint arXiv:1606.09070}, 2016.

\bibitem{o2017behavior}
Michael O'Neill and Stephen~J Wright.
\newblock Behavior of accelerated gradient methods near critical points of
  nonconvex problems.
\newblock {\em arXiv preprint arXiv:1706.07993}, 2017.

\bibitem{orr1994momentum}
Genevieve~B Orr and Todd~K Leen.
\newblock Momentum and optimal stochastic search.
\newblock In {\em Proceedings of the 1993 Connectionist Models Summer School},
  pages 351--357. Psychology Press, 1994.

\bibitem{polyak1987introduction}
Boris~T Polyak.
\newblock Introduction to optimization. translations series in mathematics and
  engineering.
\newblock {\em Optimization Software}, 1987.

\bibitem{radford2015unsupervised}
Alec Radford, Luke Metz, and Soumith Chintala.
\newblock Unsupervised representation learning with deep convolutional
  generative adversarial networks.
\newblock {\em arXiv preprint arXiv:1511.06434}, 2015.

\bibitem{rangamani2017critical}
Akshay Rangamani, Anirbit Mukherjee, Ashish Arora, Tejaswini Ganapathy, Amitabh
  Basu, Sang Chin, and Trac~D Tran.
\newblock Critical points of an autoencoder can provably recover sparsely used
  overcomplete dictionaries.
\newblock {\em arXiv preprint arXiv:1708.03735}, 2017.

\bibitem{reddi2018convergence}
Sashank~J Reddi, Satyen Kale, and Sanjiv Kumar.
\newblock On the convergence of adam and beyond.
\newblock In {\em International Conference on Learning Representations}, 2018.

\bibitem{simonyan2014very}
Karen Simonyan and Andrew Zisserman.
\newblock Very deep convolutional networks for large-scale image recognition.
\newblock {\em arXiv preprint arXiv:1409.1556}, 2014.

\bibitem{sutskever2013importance}
Ilya Sutskever, James Martens, George Dahl, and Geoffrey Hinton.
\newblock On the importance of initialization and momentum in deep learning.
\newblock In {\em International conference on machine learning}, pages
  1139--1147, 2013.

\bibitem{tieleman2012lecture}
Tijmen Tieleman and Geoffrey Hinton.
\newblock Lecture 6.5-rmsprop, coursera: Neural networks for machine learning.
\newblock {\em University of Toronto, Technical Report}, 2012.

\bibitem{hvp}
Jamie Townsend.
\newblock {A new trick for calculating Jacobian vector products}.
\newblock \url{https://j-towns. github.io/2017/06/12/A-new-trick.html}, 2008.
\newblock [Online; accessed 17-May-2018].

\bibitem{vincent2010stacked}
Pascal Vincent, Hugo Larochelle, Isabelle Lajoie, Yoshua Bengio, and
  Pierre-Antoine Manzagol.
\newblock Stacked denoising autoencoders: Learning useful representations in a
  deep network with a local denoising criterion.
\newblock {\em Journal of Machine Learning Research}, 11(Dec):3371--3408, 2010.

\bibitem{wiegerinck1994stochastic}
Wim Wiegerinck, Andrzej Komoda, and Tom Heskes.
\newblock Stochastic dynamics of learning with momentum in neural networks.
\newblock {\em Journal of Physics A: Mathematical and General}, 27(13):4425,
  1994.

\bibitem{wilson2017marginal}
Ashia~C Wilson, Rebecca Roelofs, Mitchell Stern, Nati Srebro, and Benjamin
  Recht.
\newblock The marginal value of adaptive gradient methods in machine learning.
\newblock In {\em Advances in Neural Information Processing Systems}, pages
  4151--4161, 2017.

\bibitem{xu2015show}
Kelvin Xu, Jimmy Ba, Ryan Kiros, Kyunghyun Cho, Aaron Courville, Ruslan
  Salakhudinov, Rich Zemel, and Yoshua Bengio.
\newblock Show, attend and tell: Neural image caption generation with visual
  attention.
\newblock In {\em International Conference on Machine Learning}, pages
  2048--2057, 2015.

\bibitem{yang2016unified}
Tianbao Yang, Qihang Lin, and Zhe Li.
\newblock Unified convergence analysis of stochastic momentum methods for
  convex and non-convex optimization.
\newblock {\em arXiv preprint arXiv:1604.03257}, 2016.

\bibitem{yuan2016influence}
Kun Yuan, Bicheng Ying, and Ali~H Sayed.
\newblock On the influence of momentum acceleration on online learning.
\newblock {\em Journal of Machine Learning Research}, 17(192):1--66, 2016.

\bibitem{zaheer2018adaptive}
Manzil Zaheer, Sashank Reddi, Devendra Sachan, Satyen Kale, and Sanjiv Kumar.
\newblock Adaptive methods for nonconvex optimization.
\newblock In {\em Advances in Neural Information Processing Systems}, 2018.

\bibitem{zavriev1993heavy}
SK~Zavriev and FV~Kostyuk.
\newblock Heavy-ball method in nonconvex optimization problems.
\newblock {\em Computational Mathematics and Modeling}, 4(4):336--341, 1993.

\bibitem{zhou2018convergence}
Dongruo Zhou, Yiqi Tang, Ziyan Yang, Yuan Cao, and Quanquan Gu.
\newblock On the convergence of adaptive gradient methods for nonconvex
  optimization.
\newblock {\em arXiv preprint arXiv:1808.05671}, 2018.

\end{thebibliography}
\bibliographystyle{plain}

\appendix
\begin{center}
{\Large Appendix}
\end{center}

\appendix




\section{Proofs of convergence of (stochastic) RMSProp and ADAM}

\subsection{Proving stochastic RMSProp (Proof of Theorem \ref{thm:RMSPropS-proof})}
\label{sec:supp_rmspropS}

Now we give the proof of Theorem \ref{thm:RMSPropS-proof}.


\begin{proof}
We define $\sigma_t := \max_{k=1,..,t} \Vert \nabla f_{i_k}(\x_k) \Vert$ and we solve the recursion for $\v_t$ as, $\v_t = (1-\beta_2)\sum_{k=1}^t \beta_2^{t-k}(\g_k^2 + \xi)$. This lets us write the following bounds, 
\begin{align*}
\lambda_{min} (V_t^{-\frac 1 2})  &\geq \frac  { 1 } {\sqrt{\max_{i=1,..,d} (\v_t)_i}} \geq \frac  { 1 } {\sqrt{\max_{i=1,..,d} ((1-\beta_2)\sum_{k=1}^{t} \beta_2^{t-k}(\g_k^2 + \xi\mathfrak{1}_d)_i)}}\\
&\geq  \frac  {1} {\sqrt{1-\beta_2^t}\sqrt{\sigma_t^2 + \xi}}
\end{align*}

Now we define, $\epsilon_t := \min_{k=1,..,t ,i = 1,..,d} (\nabla f_{i_k}(\x_k))^2_i$ and this lets us get the following bounds,

\begin{align*}
\lambda_{max}(V_t^{-\frac 1 2}) \leq \frac {1}{\min_{i=1,..,d} (\sqrt{ (\v_t)_i})} \leq \frac {1}{\sqrt{(1-\beta_2^t)}\sqrt{(\xi+\epsilon_t)}}    
\end{align*}

Now we invoke the bounded gradient assumption about the $f_i$ functions and replace in the above equation the eigenvalue bounds of the pre-conditioner by worst-case estimates $\mu_{\max}$ and $\mu_{\min}$ defined as, 

\begin{align*}
\lambda_{min}(V_t^{-\frac {1}{2}}) &\geq  \frac  {1} {\sqrt{\sigma_f^2 + \xi}} := \mu_{\min}\\
\lambda_{max}(V_t^{-\frac {1}{2}}) &\leq \frac {1}{\sqrt{(1-\beta_2)}\sqrt{\xi}} := \mu_{max}
\end{align*}

Using the $L$-smoothness of $f$ between consecutive iterates $\x_t$ and $\x_{t+1}$ we have, 
\begin{align*}
    f(\x_{t+1}) & \leq f(\x_t) +\langle \nabla f(\x_t), \x_{t+1} - \x_t\rangle + \frac{L}{2}\Vert \x_{t+1} - \x_t\Vert^2  \\
\end{align*}

We note that the update step of stochastic RMSProp is $x_{t+1} = x_t - \alpha (V_t)^{-\frac{1}{2}}g_t $ where $g_t$ is the stochastic gradient at iterate $x_t$. Let $H_t = \{ \x_1, \x_2,..,\x_t\}$ be the set of random variables corresponding to the first $t$ iterates. The assumptions we have about the stochastic oracle give us the following relations, $\mathbb{E}[g_t] = \nabla f(x_t)$  and $\mathbb{E}[\Vert g_t\Vert^2 ] \leq \sigma_f^2$ 
. Now we can invoke these stochastic oracle's properties and take a conditional (on $H_t$) expectation over $\g_t$ of the $L-$smoothness in equation to get,

\begin{align}\label{history}
    \nonumber \mathbb{E}[f(\x_{t+1})\mid H_t] & \leq f(\x_t) -\alpha \mathbb{E}\left [\langle \nabla f(\x_t), (V_t)^{-\frac{1}{2}}\g_t  \rangle \mid H_t \right ]+ \frac{\alpha^2 L}{2}\mathbb{E}\left [\Vert (V_t)^{-\frac{1}{2}}\g_t  \Vert^2\mid H_t \right ]  \\
    \nonumber & \leq f(\x_t) - \alpha \mathbb{E}\left [\langle \nabla f(\x_t), (V_t)^{-\frac{1}{2}}\g_t  \rangle \mid H_t \right ]+ \mu_{\max}^2\frac{\alpha^2 L}{2}\mathbb{E} \left [\Vert \g_t  \Vert^2 \mid H_t \right ]  \\
    & \leq f(\x_t) - \alpha \mathbb{E} \left [\langle \nabla f(\x_t), (V_t)^{-\frac{1}{2}}\g_t  \rangle \mid  H_t \right ]+ \mu_{\max}^2 \frac{\alpha^2\sigma_f ^2  L}{2}
\end{align}

We now separately analyze the middle term in the RHS above 
in Lemma \ref{lemma:ipbound} below and we get,

\begin{align*}
     \mathbb{E}[\langle \nabla f(\x_t), (V_t)^{-\frac{1}{2}}\g_t  \rangle \mid H_t] \geq  \mu_{min} \Vert \nabla f(\x_t)\Vert^2
\end{align*}



We substitute the above into equation \ref{history} and take expectations over $H_t$ to get, 

\begin{align}\label{enorm}
     \nonumber &\mathbb{E}[f(\x_{t+1}) -f(\x_t)] \leq - \alpha\mu_{min} \Vert \nabla f(\x_t)\Vert^2  + \mu_{max}^2\frac{\alpha^2\sigma_f^2  L}{2}\\
     \implies &\mathbb{E}[\Vert \nabla f(\x_t)\Vert^2] \leq  \frac{1}{\alpha \mu_{min}}\mathbb{E}[f(\x_{t}) -f(\x_{t+1})] +\frac{\alpha \sigma_f^2 L}{2} \frac{\mu_{max}^2}{\mu_{min}}  
\end{align}

Doing the above replacements to upperbound the RHS of equation \ref{enorm} and summing the inequation over $t = 1$ to $t=T$ and taking the average and replacing the LHS by a lowerbound of it, we get,
\begin{align*}
     \min_{t = 1 \ldots T} \mathbb{E}[\Vert \nabla f(\x_t)\Vert^2] &\leq  \frac{1}{\alpha T \mu_{\min}}\mathbb{E}[f(\x_{1}) -f(\x_{T+1})] +\frac{\alpha  \sigma_f^2 L}{2} \frac{\mu_{\max}^2}{\mu_{\min}}\\
     &\leq \frac{1}{\alpha T \mu_{\min}}\left (f(\x_{1}) -f(\x_*) \right ) +\frac{\alpha  \sigma_f^2 L}{2} \frac{\mu_{\max}^2}{\mu_{\min}}
\end{align*}
Replacing into the RHS above the optimal choice of,  
\[ \alpha = \frac{1}{\sqrt{T}} \sqrt{\frac{2\left (f(\x_{1}) -f(\x_*)\right )}{\sigma_f ^2 L \mu_{\max}^2}} = \frac{1}{\sqrt{T}} \sqrt{\frac{2 \xi (1-\beta_2)\left (f(\x_{1}) -f(\x_*)\right )}{\sigma_f ^2 L}} \]  
we get,
\[ \min_{t = 1 \ldots T} \mathbb{E}[\Vert \nabla f(\x_t)\Vert^2] \leq 2\sqrt{ \frac{1}{ T \mu_{\min}}\left (f(\x_{1}) -f(\x_*) \right ) \times \frac{ L \sigma_f^2}{2} \frac{\mu_{\max}^2}{\mu_{\min}} }= \frac{1}{\sqrt{T}}\sqrt{\frac{2L\sigma_f^2 (\sigma_f^2 + \xi)\left (f(\x_{1}) -f(\x_*) \right ) }{(1-\beta_2)\xi}}  \] 
Thus stochastic RMSProp with the above step-length is guaranteed is reach $\epsilon-$criticality in number of iterations given by, $T \leq \frac{1}{\epsilon^4} \left ( \frac{2L\sigma_f^2 (\sigma_f^2 + \xi)\left (f(\x_{1}) -f(\x_*) \right ) }{(1-\beta_2)\xi} \right ) $
\end{proof}

\begin{lemma} At any time $t$, the following holds,
\label{lemma:ipbound}
\begin{align*}
    \mathbb{E}[\langle \nabla f(x_t),V_t^{-1/2} g_t\rangle \mid H_t] \geq \mu_{\text{min}} \norm{\nabla f(x_t)}^2 
\end{align*}
\end{lemma}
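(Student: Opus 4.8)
The plan is to exploit the sign assumption (b) to make the estimate go through \emph{coordinatewise}, which is essential here because the preconditioner $V_t^{-\frac 1 2}$ is itself a function of the random gradient $\g_t$ and therefore cannot be pulled outside the conditional expectation. The first thing to record is that conditioning on the history $H_t$ fixes $\x_t$ and $\v_{t-1}$, so the only randomness left in the expression is the oracle's choice of $i_t \in \{1,\ldots,k\}$, which determines both $\g_t = \nabla f_{i_t}(\x_t)$ and, through the recursion $\v_t = \beta_2 \v_{t-1} + (1-\beta_2)(\g_t^2 + \xi\mathbf{1}_d)$, the diagonal matrix $V_t$. Since $V_t^{-\frac 1 2}$ is diagonal, I expand the inner product as
\[
\langle \nabla f(\x_t),\, V_t^{-\frac 1 2}\g_t\rangle = \sum_{i=1}^d \frac{(\nabla f(\x_t))_i\,(\g_t)_i}{\sqrt{(\v_t)_i}}.
\]

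The crucial observation is that assumption (b) forces all the $\nabla f_p(\x_t)$ into a common orthant, so their average $\nabla f(\x_t) = \frac{1}{k}\sum_{p=1}^k \nabla f_p(\x_t)$ lies in the same orthant as every possible realization of $\g_t$. Hence each product $(\nabla f(\x_t))_i (\g_t)_i \geq 0$ for every coordinate $i$ and every outcome of $i_t$. This non-negativity is exactly what licenses replacing the random coefficient $1/\sqrt{(\v_t)_i}$ by its deterministic lower bound $\mu_{\min}$ termwise: from the gradient bound (c) one has $(\g_k)_i^2 \leq \norm{\g_k}^2 \leq \sigma_f^2$, whence $(\v_t)_i = (1-\beta_2)\sum_{k=1}^t \beta_2^{t-k}((\g_k)_i^2 + \xi) \leq (1-\beta_2^t)(\sigma_f^2 + \xi) \leq \sigma_f^2 + \xi$, giving $1/\sqrt{(\v_t)_i} \geq 1/\sqrt{\sigma_f^2+\xi} = \mu_{\min}$, which is precisely the bound established just above the lemma.

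Combining these, each summand satisfies the pointwise inequality $\frac{(\nabla f(\x_t))_i(\g_t)_i}{\sqrt{(\v_t)_i}} \geq \mu_{\min}\,(\nabla f(\x_t))_i(\g_t)_i$ (valid term by term precisely because the product is non-negative). Summing over $i$, taking the conditional expectation, and invoking linearity together with the unbiasedness $\mathbb{E}[\g_t \mid H_t] = \nabla f(\x_t)$ supplied by the oracle yields
\[
\mathbb{E}[\langle \nabla f(\x_t),\, V_t^{-\frac 1 2}\g_t\rangle \mid H_t] \geq \mu_{\min}\,\langle \nabla f(\x_t),\, \mathbb{E}[\g_t\mid H_t]\rangle = \mu_{\min}\,\norm{\nabla f(\x_t)}^2,
\]
which is the claim.

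I expect the main obstacle to be conceptual rather than computational: one must resist the temptation to take the expectation over $\g_t$ first (which would be invalid, since $V_t$ is correlated with $\g_t$), and instead recognize that the sign condition is exactly the ingredient that permits the termwise lower bound to be applied \emph{before} any expectation is taken. Without assumption (b), individual coordinate products could be negative, and then bounding $1/\sqrt{(\v_t)_i}$ from below by $\mu_{\min}$ would push the inequality in the wrong direction, so the whole argument hinges on keeping the coefficients positive.
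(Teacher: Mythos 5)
Your proof is correct, and it rests on exactly the two ingredients the paper's own argument uses: the sign assumption forces every coordinate product $(\nabla f(\x_t))_i(\g_t)_i$ to be non-negative for every realization of the oracle, and the bound $(\v_t)_i \leq (1-\beta_2^t)(\sigma_f^2+\xi) \leq \sigma_f^2+\xi$ gives $1/\sqrt{(\v_t)_i} \geq \mu_{\min}$. But your packaging is genuinely different and more economical. The paper never states a pathwise inequality: it first computes the conditional expectation explicitly as an average over the data index, writing $\mathbb{E}[(V_t^{-1/2})_{ii}(\g_t)_i \mid H_t] = \frac{1}{k}\sum_{p=1}^k a_{pi}/\sqrt{(1-\beta_2)a_{pi}^2 + d_i}$ with $a_{pi} = [\nabla f_p(\x_t)]_i$, then recasts both sides of the lemma as quadratic forms $\frac{1}{k^2}\a_i^\top(\mathbf{1}_k\mathbf{h}_i^\top)\a_i$ and $\frac{\mu_{\min}}{k^2}\a_i^\top\mathbf{1}_k\mathbf{1}_k^\top\a_i$ in auxiliary vectors $\a_i,\mathbf{h}_i,\mathbf{q}_i \in \R^k$, and finally verifies that the difference is a double sum $\sum_{p,q}a_{pi}a_{qi}\left[1/\sqrt{(1-\beta_2)a_{pi}^2+d_i} - \mu_{\min}\right]$ of termwise non-negative quantities. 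You instead prove the stronger statement that $\langle\nabla f(\x_t), V_t^{-1/2}\g_t\rangle \geq \mu_{\min}\langle\nabla f(\x_t),\g_t\rangle$ holds surely, for every realization of $i_t$, and only then take conditional expectations and invoke unbiasedness. Unwound over the $k$ equally likely realizations, your pointwise inequality averaged over $p$ is precisely the paper's double sum, so the underlying computation coincides; what your route buys is brevity (no quadratic-form bookkeeping) and a slightly stronger conclusion (an almost-sure inequality rather than one in conditional expectation), while the paper's explicit expansion makes more visible exactly where the uniform sampling enters. You also correctly flag the one real trap, namely that $V_t$ is correlated with $\g_t$ so the preconditioner cannot be pulled out of the expectation, which is the same obstruction that drives the paper to its explicit computation.
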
 

\begin{proof}
\begin{align}\label{A1E1}
    \nonumber \mathbb{E}\left [\langle \nabla f(x_t),V_t^{-1/2}g_t\rangle \mid H_t \right ] 
    &= \mathbb{E} \left [\sum_{i=1}^d \nabla_i f(x_t) (V_t^{-1/2})_{ii} (g_t)_i  \mid H_t \right] \\
    &=  \sum_{i=1}^d \nabla_i f(x_t)\mathbb{E} \left [(V_t^{-1/2})_{ii} (g_t)_i \mid H_t \right ]
\end{align}
Now we introduce some new variables to make the analysis easier to present. Let $a_{pi} := \left  [ \nabla f_p (\x_t) \right ]_i$ where $p$ indexes the training data set, $p \in  \{1,\ldots,k\}$. (conditioned on $H_t$, $a_{pi}$s are constants)  This implies, 
$\nabla_i f(x_t) = \frac{1}{k}\sum_{p=1}^k a_{pi}$ 
We recall that $\mathbb{E} \left [ (\g_t)_i\right ] = \nabla_i f(\x_t)$ 
where the expectation is taken over the oracle call at the $t^{th}$ update step. Further our instantiation of the oracle is equivalent to doing the uniformly at random sampling, $(\g_t)_i \sim \{a_{pi}\}_{p=1,\ldots,k}$. 

Given that we have, $V_t = \text{diag}(\v_t)$ with $\v_t = (1-\beta_2)\sum_{k=1}^t \beta_2^{t-k}(\g_k^2 + \xi\mathbf{1}_d)$ this implies, $(V_t^{-1/2})_{ii} = \frac{1}{\sqrt{(1-\beta_2)(\g_t)_i^2 + d_i}}$ where we have defined $d_i := (1-\beta_2)\xi + (1-\beta_2)\sum_{k=1}^{t-1} \beta_2^{t-k}((\g_k)_i^2 + \xi)$. (conditioned on $H_t$, $d_i$ is a constant) This leads to an explicit form of the needed expectation over the $t^{th}-$oracle call as, 

\begin{align*}
\mathbb{E} \left [(V_t^{-1/2})_{ii} (g_t)_i\mid H_t \right ] &= \mathbb{E} \left [(V_t^{-1/2})_{ii} (g_t)_i \mid H_t \right]\\
&= \mathbb{E}_{(\g_t)_i \sim \{a_{pi}\}_{p=1,\ldots,k}} \left [\frac{ (g_t)_i}{\sqrt{(1-\beta_2)(g_t)_i^2 + d_i}} \mid H_t \right]\\
&= \frac{1}{k}\sum_{p=1}^k \frac{ a_{pi}}{\sqrt{(1-\beta_2)a_{pi}^2  + d_{i}}}
\end{align*} 

Substituting the above (and the definition of the constants $a_{pi}$) back into equation \ref{A1E1} we have, 
\begin{align*}
     \mathbb{E} \left [\langle \nabla f(x_t),V_t^{-1/2} g_t\rangle \mid H_t \right ]  =  \sum_{i=1}^d \left(\frac{1}{k}\sum_{p=1}^k a_{pi}\right)\left(\frac{1}{k}\sum_{p=1}^k \frac{ a_{pi}}{\sqrt{(1-\beta_2)a_{pi}^2  + d_{i}}}\right)
\end{align*}


We define two vectors $\a_i, \mathbf{h}_i \in \R^k$ s.t $(\a_i)_p = a_{pi}$ and  $(\mathbf{h}_i)_p = \frac{1}{\sqrt{(1-\beta_2)a_{pi}^2  + d_{i}}}$



Substituting this, the above expression can  be written as,

\begin{align}
    \mathbb{E} \left [\langle \nabla f(x_t),V_t^{-1/2} g_t\rangle \mid H_t \right ] = \frac{1}{k^2} \sum_{i=1}^d \left ( \a_i^\top \mathbf{1}_k \right ) \left (  \mathbf{h}_i^\top \a_i \right ) =  \frac{1}{k^2} \sum_{i=1}^d \a_i^\top \left ( \mathbf{1}_k \mathbf{h}_i^\top \right ) \a_i
\end{align}


Note that with this substitution, the RHS of the claimed lemma becomes, 
\begin{align*}
    \mu_{\text{min}} \norm{\nabla f(x_t)}^2  &= \mu_{\text{min}} \sum_{i=1}^d  \left(\frac{1}{k}\sum_{p=1}^k  \nabla_p f(\x_t)\right)^2 \\
    & =  \frac{ \mu_{\text{min}} }{k^2} \sum_{i=1}^d (\a_i^\top \mathbf{1}_k)^2 \\
    & =  \frac{ \mu_{\text{min}} }{k^2} \sum_{i=1}^d  \a_i^\top \mathbf{1}_k \mathbf{1}_k^\top \a_i
\end{align*}

Therefore our claim is proved if we show that for all $i$, $ \frac{1}{k^2}\a_i^\top \left ( \mathbf{1}_k \mathbf{h}_i^\top \right ) \a_i -   \frac{\mu_{\text{min}} }{k^2}  \a_i^\top \mathbf{1}_k \mathbf{1}_k^\top \a_i \geq 0$. This can be simplified as,

\begin{align*}
    \frac{1}{k^2}\a_i^\top \left ( \mathbf{1}_k \mathbf{h}_i^\top \right ) \a_i -  \mu_{\text{min}}  \frac{1}{k^2}  \a_i^\top \mathbf{1}_k \mathbf{1}_k^\top \a_i & = \frac{1}{k^2} \a_i^\top\left(\mathbf{1}_k \left( \mathbf{h}_i - \mu_{\text{min}} \mathbf{1}_k  \right)^\top \right)\a_i
\end{align*}

To further simplify, we define $\mathbf{q}_i \in \mathbb{R}^k, (\mathbf{q}_i)_p =  (\mathbf{h}_i)_p - \mu_{\text{min}} = \frac{1}{\sqrt{(1-\beta_2)a_{pi}^2  + d_{i}}} - \mu_{\text{min}}$. We therefore need to show, 
\begin{align*}
\frac{1}{k^2} \a_i^\top\left(\mathbf{1}_k \mathbf{q}_i^\top \right)\a_i \geq 0
\end{align*}








We first bound $d_i$ by recalling the definition of $\sigma_f$ (from which it follows that $a_{pi}^2 \leq \sigma_f^2$),

\begin{align}\label{mumin}
 \nonumber &d_i \leq (1-\beta_2) \left [ \xi  + \sum_{k=1}^{t-1} \beta_2^{t-k} ( \sigma_f^2 + \xi ) \right ] =  (1-\beta_2) \left [ \xi  +  \frac{\beta_2 - \beta_2^{t-1}}{1-\beta_2}( \sigma_f^2 + \xi )  \right ]\\
 \nonumber &\leq (1-\beta_2)\xi + (\beta_2 - \beta_2^{t-1})\xi + (\beta_2 - \beta_2^{t-1})\sigma_f^2 = (1-\beta_2^{t-1})\xi + (\beta_2 - \beta_2^{t-1})\sigma_f^2\\
 \nonumber &\implies \sqrt{(1-\beta_2)a_{pi}^2  + d_{i}} \leq \sqrt{(1-\beta_2)\sigma_f^2 + (1-\beta_2^{t-1})\xi + (\beta_2 - \beta_2^{t-1})\sigma_f^2}= \sqrt{(1-\beta_2^{t-1})(\sigma_f^2 + \xi)}\\
 \nonumber &\implies -\mu_{\text{min}} + \frac{1}{\sqrt{(1-\beta_2)a_{pi}^2  + d_{i}}} \geq -\mu_{\text{min}} + \frac{1}{\sqrt{(1-\beta_2^{t-1})(\sigma_f^2 + \xi)}} = -\frac{1}{\sqrt{\sigma_f^2 + \xi}} + \frac{1}{\sqrt{(1-\beta_2^{t-1})(\sigma_f^2 + \xi)}}\\
&\implies -\mu_{\text{min}} + \frac{1}{\sqrt{(1-\beta_2)a_{pi}^2  + d_{i}}} \geq 0 
\end{align}

The inequality follows since $\beta_2 \in (0,1]$ 



Putting this all together, we get,
\begin{align*}
&(\a_i^\top \mathbf{1}_k)(\mathbf{q}_i^\top \a_i)\\
= & \left ( \sum_{p=1}^k a_{pi} \right)\left ( \sum_{p=1}^k \left [ -\mu_{\text{min}}a_{pi} + \frac{a_{pi}}{\sqrt{(1-\beta_2)a_{pi}^2 + d_i}} \right ] \right )\\ 
= &\sum_{p,q=1}^k \left [ -\mu_{\text{min}}a_{pi}a_{qi} + \frac{a_{pi}a_{qi}}{\sqrt{(1-\beta_2)a_{pi}^2 + d_i}} \right ]\\ 
= &\sum_{p,q=1}^k a_{pi}a_{qi} \left [ -\mu_{\text{min}} + \frac{1}{\sqrt{(1-\beta_2)a_{pi}^2 + d_i}} \right ]
\end{align*}

Now our assumption that for all $\x$, $\text{sign}(\nabla f_p (\x)) = \text{sign}(\nabla f_q (\x))$ for all $p,q \in \{1,\ldots,k\}$ leads to the conclusion that the term $a_{pi}a_{qi} \geq 0$. And we had already shown in equation \ref{mumin} that $ \left [ -\mu_{\text{min}} + \frac{1}{\sqrt{(1-\beta_2)a_{pi}^2 + d_i}} \right ] \geq 0$. Thus we have shown that $(\a_i^\top \mathbf{1}_k)(\mathbf{q}_i^\top \a_i) \geq 0$ and this finishes the proof.
\end{proof}





\newpage 
\subsection{Proving deterministic RMSProp - the version with standard speed (Proof of Theorem \ref{thm:RMSProp1-proof})}
\label{sec:supp_rmsprop1}

\begin{proof}
By the $L-$smoothness condition and the update rule in Algorithm \ref{RMSProp_TF} we have,
\begin{align}\label{step1}
\nonumber f(\x_{t+1}) &\leq f(\x_t) - \alpha_t \langle \nabla f(\x_t),  V_t^{-\frac 1 2}\nabla f(\x_t)\rangle +  \alpha_t ^2 \frac {L}{2}\Vert V_t^{-\frac 1 2}  \nabla f(\x_t) \Vert ^2 \\
\implies f(\x_{t+1}) - f(\x_t) &\leq \alpha_t \left (\frac {L \alpha_t}{2} \Vert V_t^{-\frac 1 2}  \nabla f(\x_t) \Vert ^2 - \langle \nabla f(\x_t),  V_t^{-\frac 1 2}\nabla f(\x_t)\rangle  \right ) 
\end{align}

For $0 < \delta_t^2 < \frac{1}{ \sqrt{1-\beta_2^t} \sqrt{\sigma_t^2 + \xi}}$ we now show a strictly positive lowerbound on the following function,
\begin{align}\label{step2}
 \frac{2}{L} \left (  \frac {\langle \nabla f(\x_t),  V_t^{-\frac 1 2}\nabla f(\x_t)\rangle - \delta_t^2 \norm{\nabla f(\x_t)}^2 } { \Vert V_t^{-\frac 1 2}  \nabla f(\x_t) \Vert ^2 } \right ) 
\end{align}
We define $\sigma_t := \max_{i=1,..,t} \Vert \nabla f(\x_i) \Vert$ and we solve the recursion for $\v_t$ as, $\v_t = (1-\beta_2)\sum_{k=1}^t \beta_2^{t-k}(\g_k^2 + \xi)$. This lets us write the following bounds, 

\begin{align}\label{numer} 
\nonumber \langle \nabla f(\x_t),  V_t^{-\frac 1 2}\nabla f(\x_t)\rangle &\geq \lambda_{min} (V_t^{-\frac 1 2}) \Vert \nabla f(\x_t) \Vert ^2 \geq \frac  { \Vert \nabla f(\x_t) \Vert ^2 } {\sqrt{\max_{i=1,..,d} (\v_t)_i}} \\
\nonumber &\geq \frac  { \Vert \nabla f(\x_t) \Vert ^2 } {\sqrt{\max_{i=1,..,d} ((1-\beta_2)\sum_{k=1}^{t} \beta_2^{t-k}(\g_k^2 + \xi\mathfrak{1}_d)_i)}} \\
&\geq  \frac  { \Vert \nabla f(\x_t) \Vert ^2 } {\sqrt{1-\beta_2^t}\sqrt{\sigma_t^2 + \xi}} 
\end{align}
Now we define, $\epsilon_t := \min_{k=1,..,t ,i = 1,..,d} (\nabla f(\x_k))^2_i$ and this lets us get the following sequence of inequalities,
\begin{align}\label{denom}
\Vert V_t^{-\frac 1 2}  \nabla f(\x_t) \Vert ^2 \leq \lambda_{max}^2(V_t^{-\frac 1 2}) \Vert \nabla f(\x_t) \Vert ^2 \leq \frac {\Vert \nabla f(\x_t) \Vert ^2}{(\min_{i=1,..,d} (\sqrt{ (\v_t)_i}))^2} \leq \frac {\Vert \nabla f(\x_t) \Vert ^2}{(1-\beta_2^t)(\xi+\epsilon_t)} 
\end{align}

So combining equations \ref{denom} and \ref{numer} into equation \ref{step2} and from the exit line in the loop we are assured that $\Vert \nabla f(\x_t) \Vert ^2 \neq 0$ and combining these we have, 

\begin{align*}
\frac{2}{L} \left ( \frac{-\delta_t^2 \norm{\nabla f(\x_t)}^2  + \langle \nabla f(\x_t),  V_t^{-\frac 1 2}\nabla f(\x_t)\rangle} { \Vert V_t^{-\frac 1 2}  \nabla f(\x_t) \Vert ^2 } \right ) &\geq \frac{2}{L} \left ( \frac {- \delta_t^2 + \frac{ 1} {\sqrt{1-\beta_2^t}\sqrt{\sigma_t^2 + \xi}} }{ \frac {1}{(1-\beta_2^t)(\xi+\epsilon_t)} } \right )  \\
&\geq \frac{2 (1-\beta_2^t)(\xi + \epsilon_t)}{L} \left ( -\delta_t^2 + \frac{1}{ \sqrt{1-\beta_2^t} \sqrt{\sigma_t^2 + \xi}} \right )
\end{align*}

Now our definition of $\delta_t^2$ allows us to define a parameter $0 < \beta_t := \frac{1}{ \sqrt{1-\beta_2^t} \sqrt{\sigma_t^2 + \xi}} - \delta_t^2$ and rewrite the above equation as, 

\begin{align}\label{lower} 
\frac{2}{L} \left ( \frac{-\delta_t^2 \norm{\nabla f(\x_t)}^2  + \langle \nabla f(\x_t),  V_t^{-\frac 1 2}\nabla f(\x_t)\rangle} { \Vert V_t^{-\frac 1 2}  \nabla f(\x_t) \Vert ^2 } \right )  \geq \frac{2 (1-\beta_2^t)(\xi + \epsilon_t)\beta_t}{L} 
\end{align}
We can as well satisfy the conditions needed on the variables, $\beta_t$ and $\delta_t$ by choosing, 

\[ 
\delta_t^2  = \frac{1}{2} \min_{t=1,\ldots} \frac{1}{\sqrt{1-\beta_2^t}\sqrt{\sigma_t^2 + \xi}} =  \frac{1}{2} \frac{1}{\sqrt{\sigma^2 + \xi}} =: \delta^2\] 

and

\[ 
\beta_t  = \min_{t=1,..} \frac{1}{\sqrt{1-\beta_2^t}\sqrt{\sigma_t^2 + \xi}} - \delta^2 =  \frac{1}{2} \frac{1}{\sqrt{\sigma^2 + \xi}} 
\]  

Then the worst-case lowerbound in equation \ref{lower} becomes, 

\[  
\frac{2}{L} \left ( \frac{-\delta_t^2 \norm{\nabla f(\x_t)}^2  + \langle \nabla f(\x_t),  V_t^{-\frac 1 2}\nabla f(\x_t)\rangle} { \Vert V_t^{-\frac 1 2}  \nabla f(\x_t) \Vert ^2 } \right ) \geq \frac{2(1-\beta_2)\xi}{L} \times \frac{1}{2} \frac{1}{\sqrt{\sigma^2 + \xi}}  
\]

This now allows us to see that a constant step length $\alpha_t = \alpha >0$ can be defined as, $\alpha  = \frac{(1-\beta_2)\xi}{L\sqrt{\sigma^2 + \xi}}$ and this is 
such that the above equation can be written as, $\frac {L \alpha}{2} \Vert V_t^{-\frac 1 2}  \nabla f(\x_t) \Vert ^2 - \langle \nabla f(\x_t),  V_t^{-\frac 1 2}\nabla f(\x_t)\rangle \leq - \delta^2 \norm{\nabla f(\x_t)}^2$ . This when substituted back into equation \ref{step1} we have, 

\[ f(\x_{t+1}) - f(\x_t) \leq -\delta ^2 \alpha  \norm{\nabla f (\x_t)}^2 =  -\delta^2 \alpha \norm{\nabla f(\x_t)}^2 \]

This gives us, 
\begin{align}\label{average}
\nonumber \norm{\nabla f(\x_t)}^2 &\leq \frac{1}{\delta^2 \alpha} [ f(\x_t) -  f(\x_{t+1}) ]\\
\implies  \sum_{t=1}^T \norm{\nabla f(\x_t)}^2 &\leq \frac{1}{\delta^2 \alpha}  [ f(\x_1) -  f(\x_{*}) ]\\
\implies \min_{t=1,,.T}{\norm{\nabla f(\x_t)}^2} &\leq \frac{1}{T\delta^2 \alpha} [ f(\x_1) -  f(\x_*) ]
\end{align}



Thus for any given $\epsilon>0$, $T$ satisfying, $\frac{1}{T\delta^2 \alpha} [ f(\x_1) -  f(\x_*)] \leq \epsilon ^2$ is a sufficient condition to ensure that the algorithm finds a point $\x_{result} := \argmin_{t=1,,.T}{\norm{\nabla f(\x_t)}^2}$ with $ \norm{\nabla f(\x_{result})}^2 \leq \epsilon^2$. 

Thus we have shown that using a constant step length of $\alpha = \frac{(1-\beta_2)\xi}{L\sqrt{\sigma^2 + \xi}}$ deterministic RMSProp can find an $\epsilon-$critical point in $T = \frac{1}{\epsilon^2} \times \frac{ f(\x_1) -  f(\x_*)}{\delta^2 \alpha} =  \frac{1}{\epsilon^2}\times \frac{2L(\sigma^2 + \xi)(f(\x_1) -  f(\x_*))}{(1-\beta_2)\xi} $ steps.

\end{proof}

\newpage
\subsection{Proving deterministic RMSProp - the version with no added shift (Proof of Theorem \ref{thm:RMSProp2-proof})}
\label{sec:supp_rmsprop2}

\begin{proof}
From the $L-$smoothness condition on $f$ we have between consecutive iterates of the above algorithm, 

\begin{align}\label{step}
\nonumber f(\x_{t+1}) &\leq f(\x_t) - \alpha_t \langle \nabla f(\x_t),  V_t^{-\frac 1 2}\nabla f(\x_t)\rangle + \frac {L}{2} \alpha_t^2 \Vert V_t^{-\frac 1 2}\nabla f(\x_t) \Vert^2\\
\implies \langle \nabla f(\x_t), V_t^{-\frac 1 2} \nabla f(\x_t) \rangle &\leq \frac{1}{\alpha_t} \left(f(\x_t)-f(\x_{t+1}) \right) + \frac {L\alpha_t}{2} \Vert V_t^{-\frac 1 2}\nabla f(\x_t)  \Vert^2\\
\end{align}

Now the recursion for $\v_t$ can be solved to get, $\v_t = (1-\beta_2)\sum_{k=1}^t \beta_2^{t-k}\g_k^2$. Then $\norm{V_t^{\frac 1 2}} \geq \frac{1}{\max_{ i \in \text{Support}(\v_t)} \sqrt{(\v_t)_i}} = \frac{1}{\max_{ i \in \text{Support}(\v_t)} \sqrt{(1-\beta_2)\sum_{k=1}^t \beta_2^{t-k}(\g_k^2)_i}} = \frac{1}{\max_{ i \in \text{Support}(\v_t)} \sigma \sqrt{(1-\beta_2)\sum_{k=1}^t \beta_2^{t-k}}} = \frac {1}{\sigma \sqrt{(1-\beta_2^t)}}$. Substituting this in a lowerbound on the LHS of equation \ref{step} we get,

\[  \frac {1}{\sigma \sqrt{(1-\beta_2^t)}}\Vert \nabla f(\x_t) \Vert^2 \leq  \langle \nabla f(\x_t), V_t^{-\frac 1 2} \nabla f(\x_t) \rangle \leq \frac{1}{\alpha_t} \left(f(\x_t)-f(\x_{t+1}) \right)
+ \frac {L\alpha_t}{2} \Vert V_t^{-\frac 1 2}\nabla f(\x_t)  \Vert^2 \]

Summing the above we get,
\begin{align}\label{sum}
\sum_{t=1}^T\frac {1}{\sigma \sqrt{(1-\beta_2^t)}}\Vert \nabla f(\x_t) \Vert^2 \leq  \sum_{t=1}^T\frac{1}{\alpha_t} \left(f(\x_t)-f(\x_{t+1}) \right)
+ \sum_{t=1}^T\frac {L\alpha_t}{2} \Vert V_t^{-\frac 1 2}\nabla f(\x_t)  \Vert^2
\end{align}
Now we substitute $\alpha_t = \frac{\alpha}{\sqrt{t}}$ and invoke the definition of $B_\ell$ and $B_u$ to write the first term on the RHS of equation \ref{sum} as, 

\begin{align*}
\sum_{t=1}^T \frac {1}{\alpha_t}[f(\x_t) -f(\x_{t+1})] &=  \frac{ f(\x_1)}{\alpha} + \sum_{t=1}^T \left ( \frac{f(\x_{t+1})}{\alpha_{t+1}} - \frac{f(\x_{t+1})}{\alpha_{t}} \right ) - \frac{f(x_{T+1})}{\alpha_{T+1}} \\
&= \frac {f(\x_1)}{\alpha} - \frac{f(x_{T+1})}{\alpha_{T+1}} + \frac{1}{\alpha}\sum_{t=1}^T f(\x_{t+1}) (\sqrt{t+1} - \sqrt{t})\\
&\leq \frac{B_u}{\alpha} - \frac{B_\ell \sqrt{T+1}}{\alpha} + \frac {B_u}{\alpha} (\sqrt{T+1} -1)\\
\end{align*}

Now we bound the second term in the RHS of equation \ref{sum} as follows. Lets first define a function $P(T)$ as follows, $P(T) = \sum_{t=1}^T \alpha_t \Vert V_t^{-\frac 1 2} \nabla f(\x_t)\Vert^2$ and that gives us,  
\begin{align*}
P(T) - P(T-1) &= \alpha_T \sum_{i=1}^d \frac {\g_{T,i}^2}{\v_{T,i}} = \alpha_T \sum_{i=1}^d \frac { \g_{T,i} ^2 }{(1-\beta_2) \sum_{k=1}^T \beta_{2}^{T-k}\g^2_{k,i}}\\
&=  \frac {\alpha_T }{ (1-\beta_2)} \sum_{i=1}^d \frac { \g_{T,i} ^2 }{ \sum_{k=1}^T \beta_{2}^{T-k}\g^2_{k,i}} \leq \frac {d\alpha}{(1-\beta_2)\sqrt{T}}\\
\implies \sum_{t=2}^T [P(t) -P(t-1) ] = P(T) - P(1) &\leq \frac {d\alpha}{(1-\beta_2)} \sum_{t=2}^T \frac {1}{\sqrt{t}} \leq \frac {d\alpha}{2(1-\beta_2)} (\sqrt{T} -2)\\
\implies P(T) &\leq P(1) + \frac {d\alpha}{2(1-\beta_2)} (\sqrt{T} -2) 
\end{align*} 

So substituting the above two bounds back into the RHS of the above inequality \ref{sum}and removing the factor of $\sqrt{1-\beta_2^T} <1$ from the numerator, we can define a point $\x_{result}$ as follows,

\begin{align*}
\norm{ \nabla f(\x_{result}) }^2 := \argmin_{t=1,..,T}  \Vert \nabla f(\x_t) \Vert^2 &\leq \frac {1}{T} \sum_{t=1}^T \Vert \nabla f(\x_t) \Vert^2 \\
&\leq \frac {\sigma }{T} \Bigg ( \frac {B_u}{\alpha} - \frac{B_l \sqrt{T+1}}{\alpha}+ \frac {B_u}{\alpha} (\sqrt{T+1} -1) \\
&+  \frac {L}{2} \left [ P(1) + \frac {d\alpha}{2(1-\beta_2)} (\sqrt{T} -2) \right ]  \Bigg ) 
\end{align*}

Thus it follows that for $T = O(\frac{1}{\epsilon^4})$ the algorithm \ref{RMSProp_TF} is guaranteed to have found at least one point $\x_{result}$ such that, $\norm{ \nabla f(\x_{result}) }^2 \leq \epsilon^2$
\end{proof}


\subsection {Proving ADAM (Proof of Theorem~\ref{thm:ADAM-proof})}
\label{sec:supp_adam}

\begin{proof}
~\\ \\
Let us assume to the contrary that  $\norm{g_t} > \epsilon$ for all $t=1,2,3.\ldots$. We will show that this assumption will lead to a contradiction. 
By $L-$smoothness of the objective we have the following relationship between the values at consecutive updates, 

\[ f(\x_{t+1}) \leq f(\x_t) + \langle \nabla f(\x_t), \x_{t+1} - \x_t \rangle + \frac {L}{2} \norm{\x_{t+1} - \x_t}^2 \]

~\\
Substituting the update rule using a dummy step length $\eta_t >0$ we have, 

\begin{align}\label{decrease1} 
\nonumber f(\x_{t+1}) &\leq f(\x_t) - \eta_t \langle \nabla f(\x_t), \Big( V_t^{\frac 1 2} + \text{diag} (\xi\mathbf{1}_d) \Big)^{-1} \m_t \rangle + \frac {L \eta_t ^2}{2} \norm{ \Big( V_t^{\frac 1 2} + \text{diag} (\xi\mathbf{1}_d) \Big)^{-1} \m_t}^2\\
\implies f(\x_{t+1}) -  f(\x_t) &\leq \eta_t \left ( -  \langle \g_t , \Big( V_t^{\frac 1 2} + \text{diag} (\xi\mathbf{1}_d) \Big)^{-1} \m_t \rangle + \frac {L \eta_t }{2} \norm{ \Big( V_t^{\frac 1 2} + \text{diag} (\xi\mathbf{1}_d) \Big)^{-1} \m_t}^2 \right )
\end{align}

The RHS in equation \ref{decrease1} above is a quadratic in $\eta_t$ with two roots: $0$ and $\frac{\langle \g_t , \Big( V_t^{\frac 1 2} + \text{diag} (\xi\mathbf{1}_d) \Big )^{-1} \m_t \rangle}{\frac {L}{2} \norm{ \Big( V_t^{\frac 1 2} + \text{diag} (\xi\mathbf{1}_d) \Big )^{-1} \m_t}^2}$. So the quadratic's minimum value is at the midpoint of this interval, which gives us a candidate $t^{th}-$step length i.e $\alpha_t^* := \frac{1}{2}\cdot\frac{\langle \g_t , \Big( V_t^{\frac 1 2} + \text{diag} (\xi\mathbf{1}_d) \Big )^{-1} \m_t \rangle}{\frac {L}{2} \norm{ \Big( V_t^{\frac 1 2} + \text{diag} (\xi\mathbf{1}_d) \Big )^{-1} \m_t}^2}$ and the value of the quadratic at this point is $-\frac{1}{4}\cdot\frac{(\langle \g_t , \Big( V_t^{\frac 1 2} + \text{diag} (\xi\mathbf{1}_d) \Big )^{-1} \m_t \rangle)^2}{\frac {L}{2} \norm{ \Big( V_t^{\frac 1 2} + \text{diag} (\xi\mathbf{1}_d) \Big )^{-1} \m_t}^2}.$
That is with step lengths being this $\alpha_t^*$ we have the following guarantee of decrease of function value between consecutive steps,

\begin{align}\label{decrease} 
f(\x_{t+1}) -  f(\x_t) \leq -\frac{1}{2L}\cdot\frac{(\langle \g_t , \Big( V_t^{\frac 1 2} + \text{diag} (\xi\mathbf{1}_d) \Big )^{-1} \m_t \rangle)^2}{ \norm{ \Big( V_t^{\frac 1 2} + \text{diag} (\xi\mathbf{1}_d) \Big )^{-1} \m_t}^2}
\end{align}
Now we separately lower bound the numerator and upper bound the denominator of the RHS above. 
~\\
\paragraph{Upperbound on $\norm{ \Big( V_t^{\frac 1 2} + \text{diag} (\xi\mathbf{1}_d) \Big )^{-1} \m_t}$}
~\\ 
We have, $\lambda_{max} \Big ( \Big( V_t^{\frac 1 2} + \text{diag} (\xi\mathbf{1}_d) \Big )^{-1} \Big ) \leq \frac{1}{\xi + \min_{i=1..d} \sqrt{(\v_t)_i}}$ Further we note that the recursion of $\v_t$ can be solved as, $\v_t = (1-\beta_2) \sum_{k=1}^t \beta_2 ^{t-k} \g_k^2$. Now we define, $\epsilon_t := \min_{k=1,..,t,i=1,..,d} (\g_k^2)_i$ and this gives us,

\begin{align}\label{max1}
\lambda_{max} \Big ( \Big( V_t^{\frac 1 2} + \text{diag} (\xi\mathbf{1}_d) \Big )^{-1} \Big ) \leq \frac {1}{\xi + \sqrt{(1-\beta_2^t)\epsilon_t}}
\end{align}
~\\
We solve the recursion for $\m_t$ to get, $\m_t = (1-\beta_1)\sum_{k=1}^t \beta_1^{t-k}\g_k$. Then by triangle inequality and defining $\sigma_t := \max_{i=1,..,t} \norm{\nabla f (\x_i)}$ we have, $\norm{\m_t} \leq (1-\beta_1^t)\sigma_t $. Thus combining this estimate of $\norm{\m_t}$ with equation \ref{max1} we have,

\begin{align}\label{denom1} 
\norm{ \Big( V_t^{\frac 1 2} + \text{diag} (\xi\mathbf{1}_d) \Big )^{-1} \m_t} \leq \frac{(1-\beta_1^t)\sigma_t}{\xi +\sqrt{\epsilon_t (1-\beta_2^t)}} \leq \frac{(1-\beta_1^t)\sigma_t}{\xi}
\end{align} 

\paragraph{Lowerbound on $\langle \g_t , \Big( V_t^{\frac 1 2} + \text{diag} (\xi\mathbf{1}_d) \Big )^{-1} \m_t \rangle$}

~\\
To analyze this we define the following sequence of functions for each $i = 0,1,2..,t$

\[ Q_i = \langle \g_t , \Big( V_t^{\frac 1 2} + \text{diag} (\xi\mathbf{1}_d) \Big )^{-1} \m_i \rangle \]

~\\
This gives us the following on substituting the update rule for $\m_t$,

\begin{align*} 
Q_i - \beta_1 Q_{i-1} &= \langle \g_t , \Big( V_t^{\frac 1 2} + \text{diag} (\xi\mathbf{1}_d) \Big )^{-1} (\m_i - \beta_1 \m_{i-1})\rangle\\
&= (1-\beta_1) \langle \g_t , \Big( V_t^{\frac 1 2} + \text{diag} (\xi\mathbf{1}_d) \Big )^{-1} \g_i \rangle
\end{align*}

~\\
At $i=t$ we have, 
\begin{align*} 
Q_t - \beta_1 Q_{t-1} &\geq (1-\beta_1)  \norm{\g_t}^2 \lambda_{min} \Big ( \Big( V_t^{\frac 1 2} + \text{diag} (\xi\mathbf{1}_d) \Big )^{-1} \Big )
\end{align*}

~\\
Lets define, $\sigma_{t-1} := \max_{i=1,..,t-1} \norm{\nabla f (\x_i)}$ and this gives us for $i \in \{1,..,t-1\}$, 

\begin{align*} 
Q_i - \beta_1 Q_{i-1} &\geq -(1-\beta_1) \norm{\g_t} \sigma_{t-1}  \lambda_{max} \Big ( \Big( V_t^{\frac 1 2} + \text{diag} (\xi\mathbf{1}_d) \Big )^{-1} \Big )
\end{align*}

~\\
We note the following identity, 

\[ Q_t -\beta_1 ^t Q_0 = (Q_t - \beta_1 Q_{t-1}) + \beta_1 (Q_{t-1}-\beta_1Q_{t-2}) + \beta_1^2 (Q_{t-2}-\beta_1Q_{t-3}) + .. + \beta_1^{t-1}(Q_{1} - \beta_1 Q_0)\]

Now we use the lowerbounds proven on $Q_i - \beta_1 Q_{i-1}$ for $i\in \{1,..,t-1\}$ and $Q_t - \beta_1 Q_{t-1}$ to lowerbound the above sum as,

\begin{align}\label{telescope}
\nonumber Q_t -\beta_1 ^t Q_0  &\geq (1-\beta_1) \norm{\g_t}^2 \lambda_{min} \Big ( \Big( V_t^{\frac 1 2} + \text{diag} (\xi\mathbf{1}_d) \Big )^{-1} \Big )\\
\nonumber &- (1-\beta_1) \norm{\g_t}\sigma_{t-1} \lambda_{max} \Big ( \Big( V_t^{\frac 1 2} + \text{diag} (\xi\mathbf{1}_d) \Big )^{-1} \Big )\sum_{j=1}^{t-1} \beta_1^j\\
\nonumber &\geq (1-\beta_1) \norm{\g_t}^2 \lambda_{min} \Big ( \Big( V_t^{\frac 1 2} + \text{diag} (\xi\mathbf{1}_d) \Big )^{-1} \Big )\\
&-(\beta_1 - \beta_1^t) \norm{\g_t} \sigma_{t-1} \lambda_{max} \Big ( \Big( V_t^{\frac 1 2} + \text{diag} (\xi\mathbf{1}_d) \Big )^{-1} \Big )
\end{align}

We can evaluate the following lowerbound, $\lambda_{min} \Big ( \Big( V_t^{\frac 1 2} + \text{diag} (\xi\mathbf{1}_d) \Big )^{-1} \Big ) \geq \frac{1}{\xi + \sqrt{\max_{i=1,..,d} (\v_t)_i}}$. Next we remember that the recursion of $\v_t$ can be solved as, $\v_t = (1-\beta_2) \sum_{k=1}^t \beta_2 ^{t-k} \g_k^2$ and we define, $\sigma_t := \max_{i=1,..,t} \norm{\nabla f (\x_i)}$ to get, 

\begin{align}\label{min}
\lambda_{min} \Big ( \Big( V_t^{\frac 1 2} + \text{diag} (\xi\mathbf{1}_d) \Big )^{-1} \Big ) \geq \frac {1}{\xi + \sqrt{(1-\beta_2^t)\sigma_t^2}}
\end{align}

~\\
Now we combine the above and equation \ref{max1} and the known value of $Q_0 = 0$ (from definition and initial conditions) to get from the equation \ref{telescope},

\begin{align}\label{qtb}
\nonumber Q_t  &\geq -(\beta_1 - \beta_1^t)\norm{\g_t}\sigma_{t-1}\frac {1}{\xi + \sqrt{(1-\beta_2^t)\epsilon_t}}\\
\nonumber &+(1-\beta_1)\norm{\g_t}^2\frac {1}{\xi + \sqrt{(1-\beta_2^t)\sigma_t^2}}\\
&\geq \norm{\g_t}^2 \left ( \frac {(1-\beta_1)}{\xi + \sigma \sqrt{(1-\beta_2^t)}} -  \frac {(\beta_1 - \beta_1^t) \sigma}{\xi \norm{\g_t}} \right )
\end{align}

In the above inequalities we have set $\epsilon_t =0$ and we have set, $\sigma_t = \sigma_{t-1} =\sigma$. Now we examine the following part of the lowerbound proven above,

\begin{align*}
\frac {(1-\beta_1)}{\xi + \sqrt{(1-\beta_2^t)\sigma^2}} - \frac{(\beta_1 - \beta_1^t) \sigma}{\xi \norm{\g_t}} &= \frac {\xi \norm{\g_t} (1-\beta_1) - \sigma (\beta_1 - \beta_1^t)(\xi + \sigma \sqrt{(1-\beta_2^t)})}{\xi \norm{\g_t} ( \xi + \sigma\sqrt{(1-\beta_2^t)} ) }\\
&= \sigma(\beta_1 - \beta_1^t)\frac {\xi \left ( \frac{\norm{\g_t} (1-\beta_1)}{\sigma (\beta_1 - \beta_1^t)} - 1 \right ) - \sigma \sqrt{(1-\beta_2^t)}}{\xi \norm{\g_t} ( \xi + \sigma\sqrt{(1-\beta_2^t)} ) }\\
&=\sigma(\beta_1 - \beta_1^t)\left ( \frac{\norm{\g_t} (1-\beta_1)}{\sigma(\beta_1 - \beta_1^t)} - 1 \right ) \frac {\xi - \left ( \frac{\sigma\sqrt{(1-\beta_2^t)}}{-1+  \frac {(1-\beta_1)\norm{\g_t}}{(\beta_1 - \beta_1^t) \sigma}} \right ) }{\xi \norm{\g_t} ( \xi + \sigma\sqrt{(1-\beta_2^t)} )}
\end{align*}

Now we remember the assumption that we are working under i.e  $\norm{\g_t} > \epsilon$. Also by definition $0 < \beta_1 <1$ and hence we have $0 <\beta_1 - \beta_1^t < \beta_1$. This implies, $\frac {(1-\beta_1)\norm{\g_t}}{(\beta_1 - \beta_1^t) \sigma} > \frac{(1-\beta_1)\epsilon}{\beta_1 \sigma} > 1$ where the last inequality follows because of our choice of $\epsilon$ as stated in the theorem statement. This allows us to define a constant, $ \frac {\epsilon(1-\beta_1)}{\beta_1 \sigma} - 1 := \theta_1 > 0$ s.t  $\frac {(1-\beta_1)\norm{\g_t}}{(\beta_1 - \beta_1^t) \sigma} - 1 > \theta_1$
~\\ \\
Similarly our definition of $\xi$ allows us to define a constant $\theta_2 > 0$ to get,

\[ \left ( \frac{\sigma\sqrt{(1-\beta_2^t)}}{-1+  \frac {(1-\beta_1)\norm{\g_t}}{(\beta_1 - \beta_1^t) \sigma}} \right ) < \frac{\sigma}{\theta_1} = \xi - \theta_2 \]

Putting the above back into the lowerbound for $Q_t$ in equation \ref{qtb} we have, 

\begin{align}\label{qtfinal}
Q_t \geq \norm{\g_t} ^2 \left (  \frac{\sigma (\beta_1 - \beta_1^2) \theta_1  \theta_2}{\xi \sigma (\xi + \sigma)} \right )
\end{align}

Now we substitute the above and equation \ref{denom1} into equation \ref{decrease} to get, 
\begin{align}\label{findecrease}
\nonumber f(\x_{t+1}) -  f(\x_t) &\leq -\frac{1}{2L}\cdot\frac{(\langle \g_t , \Big( V_t^{\frac 1 2} + \text{diag} (\xi\mathbf{1}_d) \Big )^{-1} \m_t \rangle)^2}{ \norm{ \Big( V_t^{\frac 1 2} + \text{diag} (\xi\mathbf{1}_d) \Big )^{-1} \m_t}^2}\\
\nonumber &\leq -\frac {1}{2L} \frac {Q_t^2}{\norm{ \Big( V_t^{\frac 1 2} + \text{diag} (\xi\mathbf{1}_d) \Big )^{-1} \m_t}^2}\\
\nonumber &\leq -\frac {1}{2L}\frac {\norm{\g_t} ^4 \left (  \frac{ (\beta_1 - \beta_1^2) \theta_1  \theta_2}{\xi (\xi + \sigma)} \right )^2}{\left ( \frac{(1-\beta_1^t)\sigma}{\xi} \right ) ^2 }\\
 &\leq -\frac {\norm{\g_t} ^4}{2L} \left (    \frac{ (\beta_1 - \beta_1^2)^2 \theta_1^2  \theta_2^2}{ (\xi + \sigma)^2 (1-\beta_1^t)^2\sigma^2} \right )
\end{align}

\begin{align*}
 \implies \left (    \frac{ (\beta_1 - \beta_1^2)^2 \theta_1^2  \theta_2^2}{ 2L (\xi + \sigma)^2 (1-\beta_1^t)^2\sigma^2} \right ) \norm{\nabla f (\x_t)}^4 &\leq [ f(\x_t) - f(\x_{t+1})] \\
 \implies \sum_{t=2}^T \left (    \frac{ (\beta_1 - \beta_1^2)^2 \theta_1^2  \theta_2^2}{ 2L (\xi + \sigma)^2 \sigma^2} \right )  \norm{\nabla f (\x_t)}^4 &\leq  [f(\x_2) - f(\x_{T+1}) ]\\
\implies \min_{t=2,..,T} \norm{\nabla f (\x_t)}^4 &\leq \frac{2L (\xi + \sigma)^2 \sigma^2}{T(\beta_1 - \beta_1^2)^2 \theta_1^2  \theta_2^2 } [f(\x_2) - f(\x_*)]\\
\end{align*}

Observe that if, 

\[ T \geq \frac{2L \sigma^2 (\xi + \sigma)^2}{2 \epsilon^4 (\beta_1 - \beta_1^2)^2 \theta_1^2  \theta_2^2 } [f(\x_2) - f(\x_*)] \]

then the RHS of the inequality above is less than or equal to $\epsilon^4$ and this would contradict the assumption that $\norm{\nabla f (\x_t)} > \epsilon$ for all $t=1,2, \ldots$. 



As a consequence we have proven the first part of the theorem which guarantees the existence of positive step lengths, $\alpha_t$ s.t ADAM finds an approximately critical point in finite time. 

Now choose $\theta_1 = 1$ i.e $\frac{\epsilon}{2} = \frac{\beta_1 \sigma}{1-\beta_1}$ i.e $\beta_1 = \frac{\epsilon}{\epsilon + 2\sigma} \implies \beta_1(1-\beta_1) = \frac{\epsilon}{\epsilon + 2\sigma} ( 1 - \frac{\epsilon}{\epsilon + 2\sigma}) = \frac {2\sigma \epsilon}{(\epsilon + 2\sigma)^2}$. This also gives a easier-to-read condition on $\xi$ in terms of these parameters i.e $\xi > \sigma$. Now choose $\xi = 2\sigma$ i.e $\theta_2 = \sigma$ and making these substitutions gives us,

\begin{align*}
T \geq \frac{18L\sigma^4}{2\epsilon^4  \left ( \frac {2\sigma \epsilon}{(\epsilon + 2\sigma)} \right ) ^2 \sigma^2 } [f(\x_2) - f(\x_*)] \geq \frac{18L}{8\epsilon^6  \left ( \frac {1}{\epsilon + 2\sigma} \right ) ^2  } [f(\x_2) - f(\x_*)] \geq \frac{9 L \sigma ^2}{\epsilon^6   } [f(\x_2) - f(\x_*)] 
\end{align*}

We substitute these choices in the step length found earlier to get, 

\begin{align*}
\alpha_t^* &= \frac{1}{L}\cdot\frac{\langle \g_t , \Big( V_t^{\frac 1 2} + \text{diag} (\xi\mathbf{1}_d) \Big )^{-1} \m_t \rangle}{\norm{ \Big( V_t^{\frac 1 2} + \text{diag} (\xi\mathbf{1}_d) \Big )^{-1} \m_t}^2} = \frac{1}{L}\cdot\frac{Q_t}{\norm{ \Big( V_t^{\frac 1 2} + \text{diag} (\xi\mathbf{1}_d) \Big )^{-1} \m_t}^2}\\
&\geq \frac{1}{L} \frac{\norm{\g_t} ^2 \left (  \frac{\sigma^2 (\beta_1 - \beta_1^2) }{\xi \sigma (\xi + \sigma)} \right )}{ \left (  \frac{(1-\beta_1^t)\sigma}{\xi} \right )^2} = \frac{\norm{\g_t} ^2}{L(1-\beta_1^t)^2}\frac{4\epsilon}{3(\epsilon + 2\sigma)^2} := \alpha_t
\end{align*}

In the theorem statement we choose to call as the final $\alpha_t$ the lowerbound proven above. We check below that this smaller value of $\alpha_t$ still guarantees a decrease in the function value that is sufficient for the statement of the theorem to hold.

\paragraph{A consistency check!} Let us substitute the above final value of the step length $ \alpha_t = \frac{1}{L} \frac{\norm{\g_t} ^2 \left (  \frac{\sigma^2 (\beta_1 - \beta_1^2) }{\xi \sigma (\xi + \sigma)} \right )}{ \left (  \frac{(1-\beta_1^t)\sigma}{\xi} \right )^2} = \frac{\xi}{L(1-\beta_1^t)^2}\norm{\g_t} ^2 \left (  \frac{ (\beta_1 - \beta_1^2) }{\sigma (\xi + \sigma)} \right )$, the bound in  equation \ref{denom1} (with $\sigma_t$ replaced by $\sigma$), and the bound in equation \ref{qtfinal} (at the chosen values of $\theta_1 = 1$ and $\theta_2 = \sigma$) in the original equation \ref{decrease} to measure the decrease in the function value between consecutive steps, 

\begin{align*} 
f(\x_{t+1}) -  f(\x_t) &\leq \alpha_t \left ( -  \langle \g_t , \Big( V_t^{\frac 1 2} + \text{diag} (\xi\mathbf{1}_d) \Big)^{-1} \m_t \rangle + \frac {L \alpha_t }{2} \norm{ \Big( V_t^{\frac 1 2} + \text{diag} (\xi\mathbf{1}_d) \Big)^{-1} \m_t}^2 \right )\\
&\leq \alpha_t \left ( -  Q_t + \frac {L \alpha_t }{2} \norm{ \Big( V_t^{\frac 1 2} + \text{diag} (\xi\mathbf{1}_d) \Big)^{-1} \m_t}^2 \right )\\
&\leq \frac{\xi}{L(1-\beta_1^t)^2}\norm{\g_t} ^2 \left (  \frac{ (\beta_1 - \beta_1^2) }{\sigma (\xi + \sigma)} \right ) \left ( -  \norm{\g_t} ^2 \left (  \frac{\sigma (\beta_1 - \beta_1^2) \theta_1  \theta_2}{\xi \sigma (\xi + \sigma)} \right ) \right )\\
&+ \frac{L}{2} \left ( \frac{\xi}{L(1-\beta_1^t)^2}\norm{\g_t} ^2 \left (  \frac{ (\beta_1 - \beta_1^2) }{\sigma (\xi + \sigma)} \right ) \frac{(1-\beta_1^t)\sigma}{\xi}\right )^2
\end{align*}
The RHS above can be simplified to be shown to be equal to the RHS in equation \ref{findecrease} at the same values of $\theta_1$ and $\theta_2$ as used above. And we remember that the bound on the running time was derived from this equation \ref{findecrease}. 
\end{proof}

\newpage

\section{Hyperparameter Tuning}\label{supp_sec:exp_details}
Here we describe how we tune the hyper-parameters of each optimization algorithm. NAG has two hyper-parameters, the step size $\alpha$ and the momentum $\mu$. The main hyper-parameters for RMSProp are the step size $\alpha$, the decay parameter $\beta_2$ and the perturbation $\xi$. ADAM, in addition to the ones in RMSProp, also has a momentum parameter $\beta_1$. We vary the step-sizes of ADAM in the conventional way of $\alpha_t = \alpha \sqrt{1 - \beta_2^t} / (1 - \beta_1^t)$. 

For tuning the step size, we follow the same method used in \cite{wilson2017marginal}. We start out with a logarithmically-spaced grid of five step sizes. If the best performing parameter was at one of the extremes of the grid, we tried new grid points so that the best performing parameters were at one of the middle points in the grid. While it is computationally infeasible even with substantial resources to follow a similarly rigorous tuning process for all other hyper-parameters, we do tune over them somewhat as described below.


\paragraph{NAG}
The initial set of step sizes used for NAG were: $\{3\mathrm{e}{-3}, 1\mathrm{e}{-3}, 3\mathrm{e}{-4}, 1\mathrm{e}{-4}, 3\mathrm{e}{-5}\}$. We tune the momentum parameter over values $\mu \in \{0.9, 0.99\}$.

\paragraph{RMSProp}
The initial set of step sizes used were: $\{3\mathrm{e}{-4}, 1\mathrm{e}{-4}, 3\mathrm{e}{-5}, 1\mathrm{e}{-5}, 3\mathrm{e}{-6}\}$. We tune over $\beta_2 \in \{0.9, 0.99\}$. We set the perturbation value $\xi = 10^{-10}$, following the default values in TensorFlow, except for the experiments in Section \ref{sec:main_xi}. In Section \ref{sec:main_xi}, we show the effect on convergence and generalization properties of ADAM and RMSProp when changing this parameter $\xi$.

Note that ADAM and RMSProp uses an accumulator for keeping track of decayed squared gradient $\v_t$. For ADAM this is recommended to be initialized at $\v_0 = 0$. However, we found in the TensorFlow implementation of RMSProp that it sets $\v_0 = \mathbf{1}_d$. Instead of using this version of the algorithm, we used a modified version where we set $\v_0 = 0$. We typically found setting $\v_0 = 0$ to lead to faster convergence in our experiments.

\paragraph{ADAM}
The initial set of step sizes used were: $\{3\mathrm{e}{-4}, 1\mathrm{e}{-4}, 3\mathrm{e}{-5}, 1\mathrm{e}{-5}, 3\mathrm{e}{-6}\}$. 
For ADAM, we tune over $\beta_1$ values of $\{0.9, 0.99\}$. For ADAM, We set $\beta_2 = 0.999$ for all our experiments as is set as the default in TensorFlow. 
Unless otherwise specified we use for the perturbation value $\xi = 10^{-8}$ for ADAM, following the default values in TensorFlow. 

Contrary to what is the often used values of $\beta_1$ for ADAM (usually set to 0.9), we found that we often got better results on the autoencoder problem when setting $\beta_1 = 0.99$.

\section{Effect of the $\xi$ parameter on adaptive gradient algorithms}

In Figure \ref{xi_fixed}, we show the same effect of changing $\xi$ as in Section \ref{sec:main_xi} on a 1 hidden layer network of 1000 nodes, while keeping all other hyper-parameters fixed (such as learning rate, $\beta_1$, $\beta_2$). These other hyper-parameter values were fixed at the best values of these parameters for the default values of $\xi$, i.e., $\xi = 10^{-10}$ for RMSProp and $\xi = 10^{-8}$ for ADAM.

\begin{figure}[h!]
\centering
\begin{subfigure}[t]{0.32\textwidth}
\includegraphics[width=\textwidth]{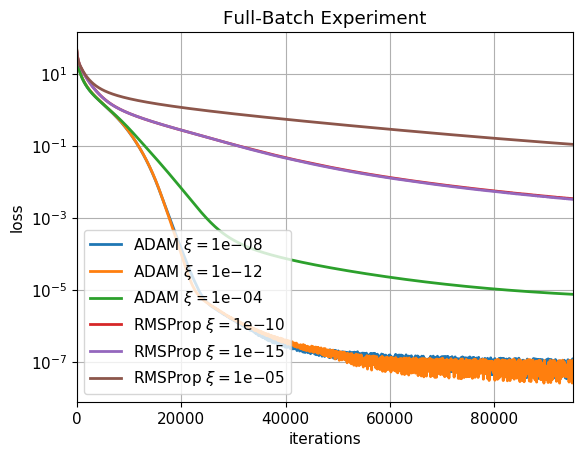}
\caption{Loss on training set}
\end{subfigure}
\begin{subfigure}[t]{0.32\textwidth}
\includegraphics[width=\textwidth]{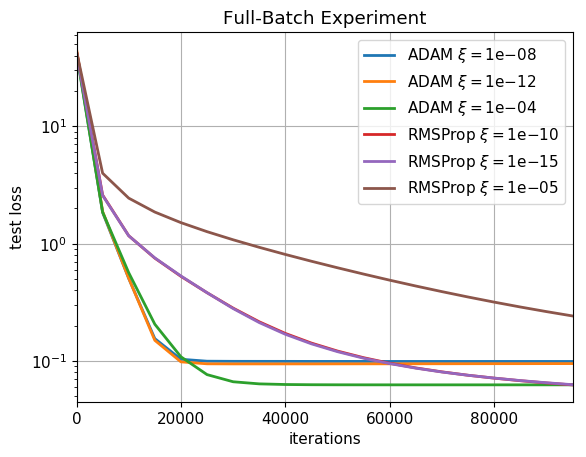}
\caption{Loss on test set}
\end{subfigure}
\begin{subfigure}[t]{0.32\textwidth}
\includegraphics[width=\textwidth]{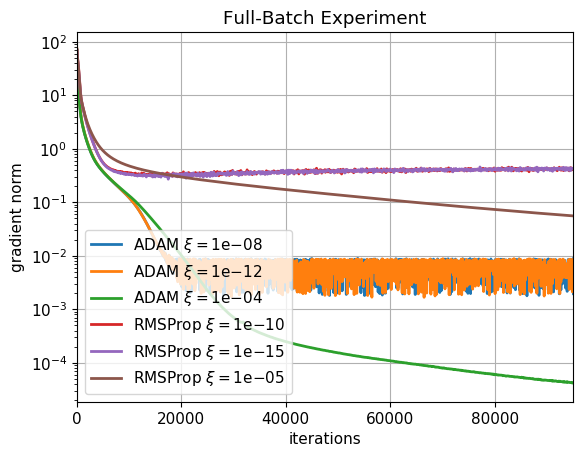}
\caption{Gradient norm on training set}
\end{subfigure}
\caption{Fixed parameters with changing $\xi$ values. 1 hidden layer network of 1000 nodes}
\label{xi_fixed}
\end{figure}

\section{Additional Experiments}
%

\subsection{Additional full-batch experiments on $22 \times 22$ sized images}
\label{sec:supp_fullbatch}
In Figures \ref{full_batch_trainloss_imsize_22}, \ref{full_batch_testloss_imsize_22} and \ref{full_batch_normgrad_imsize_22}, we show training loss, test loss and gradient norm results for a variety of additional network architectures. Across almost all network architectures, our main results remain consistent. ADAM with $\beta_1 = 0.99$ consistently reaches lower training loss values as well as better generalization than NAG.
\vspace{-3mm}
\begin{figure}[h!]
\centering
\begin{subfigure}[t]{0.32\textwidth}
\includegraphics[width=\textwidth]{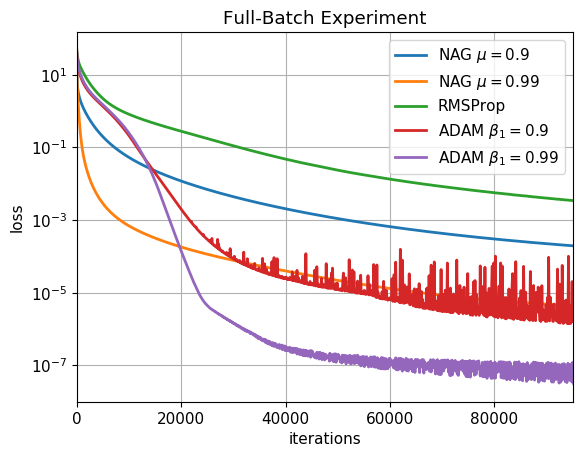}
\vspace{-6mm}\caption{1 hidden layer; 1000 nodes}
\end{subfigure}
\begin{subfigure}[t]{0.32\textwidth}
\includegraphics[width=\textwidth]{final_images/1000_1000_loss.png}
\vspace{-6mm}\caption{3 hidden layers; 1000 nodes each}
\end{subfigure}
\begin{subfigure}[t]{0.32\textwidth}
\includegraphics[width=\textwidth]{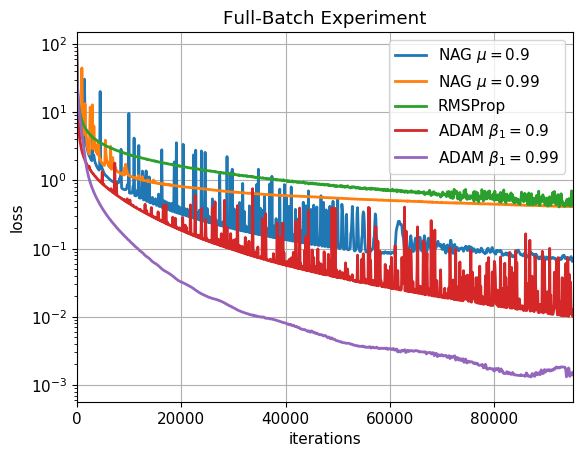}
\vspace{-6mm}\caption{5 hidden layers; 1000 nodes each}
\end{subfigure}
\begin{subfigure}[t]{0.32\textwidth}
\includegraphics[width=\textwidth]{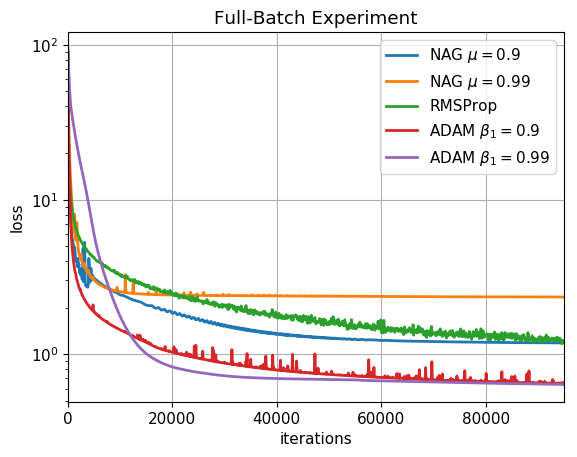}
\vspace{-6mm}\caption{3 hidden layers; 300 nodes}
\end{subfigure}
\begin{subfigure}[t]{0.32\textwidth}
\includegraphics[width=\textwidth]{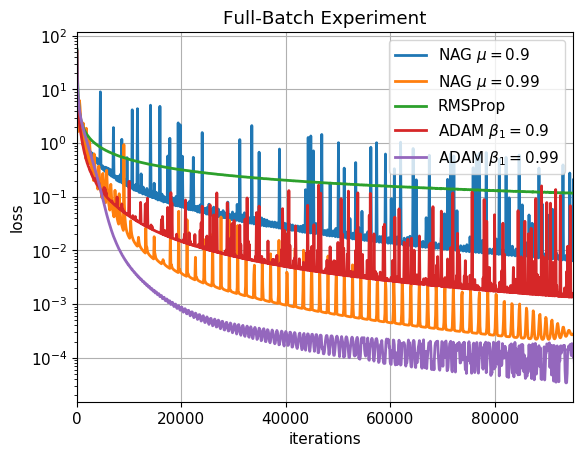}
\vspace{-6mm}\caption{3 hidden layer; 3000 nodes}
\end{subfigure}
\begin{subfigure}[t]{0.32\textwidth}
\includegraphics[width=\textwidth]{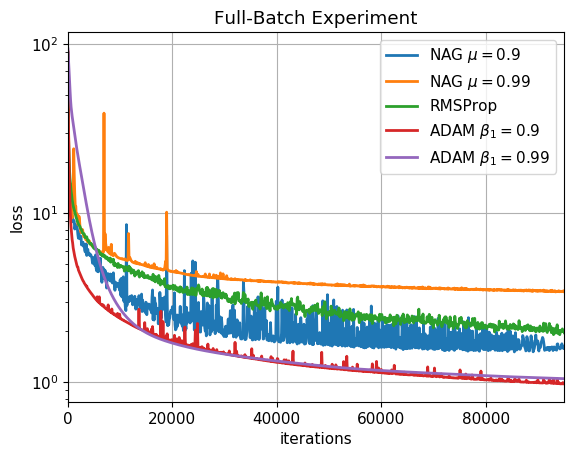}
\vspace{-6mm}\caption{5 hidden layer; 300 nodes}
\end{subfigure}
\vspace{-2mm}
\caption{Loss on training set; Input image size $22\times 22$}
\label{full_batch_trainloss_imsize_22}
\end{figure}

\begin{figure}[h!]
\centering
\begin{subfigure}[t]{0.32\textwidth}
\includegraphics[width=\textwidth]{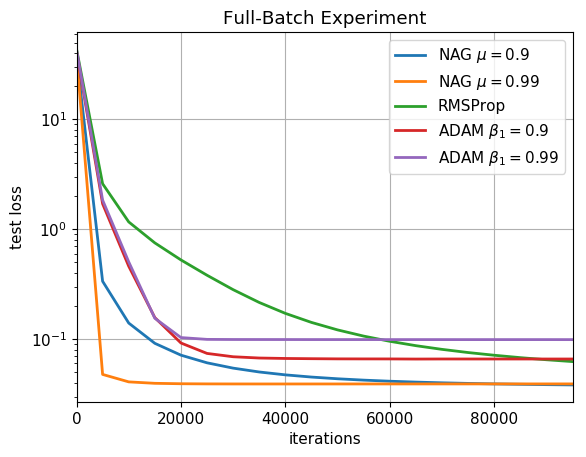}
\caption{1 hidden layer; 1000 nodes}
\end{subfigure}
\begin{subfigure}[t]{0.32\textwidth}
\includegraphics[width=\textwidth]{final_images/1000_1000_test.png}
\caption{3 hidden layers; 1000 nodes each}
\end{subfigure}
\begin{subfigure}[t]{0.32\textwidth}
\includegraphics[width=\textwidth]{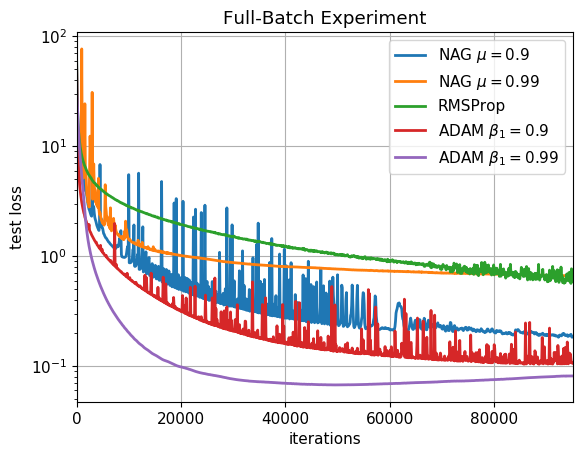}
\caption{5 hidden layers; 1000 nodes each}
\end{subfigure}
\begin{subfigure}[t]{0.32\textwidth}
\includegraphics[width=\textwidth]{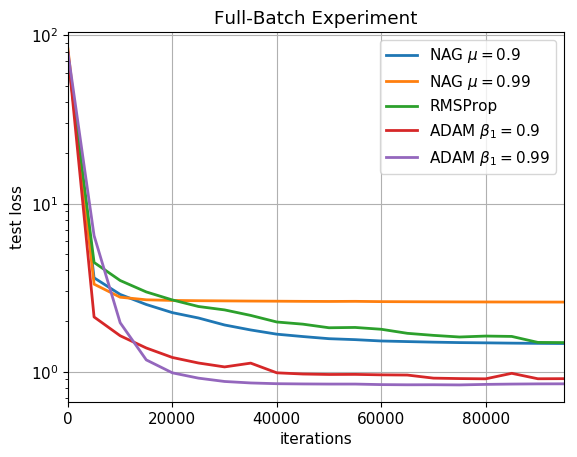}
\caption{3 hidden layers; 300 nodes}
\end{subfigure}
\begin{subfigure}[t]{0.32\textwidth}
\includegraphics[width=\textwidth]{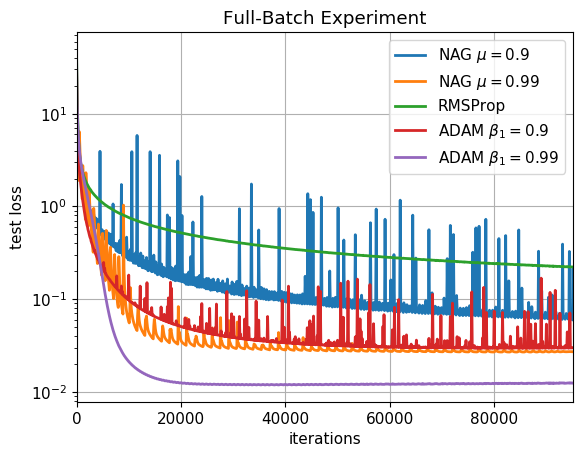}
\caption{3 hidden layer; 3000 nodes}
\end{subfigure}
\begin{subfigure}[t]{0.32\textwidth}
\includegraphics[width=\textwidth]{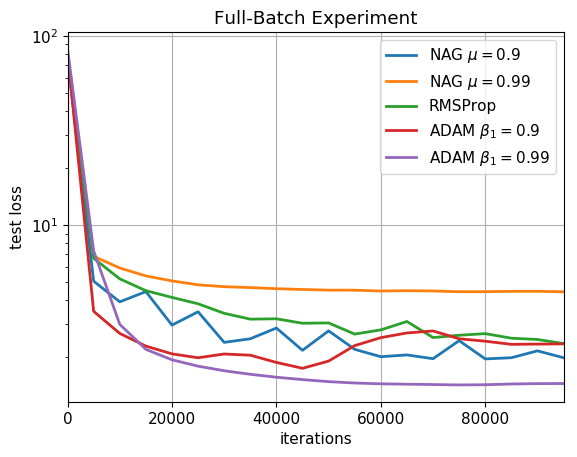}
\caption{5 hidden layer; 300 nodes}
\end{subfigure}
\caption{Loss on test set; Input image size $22\times 22$}
\label{full_batch_testloss_imsize_22}
\end{figure}

\begin{figure}[h!]
\centering
\begin{subfigure}[t]{0.32\textwidth}
\includegraphics[width=\textwidth]{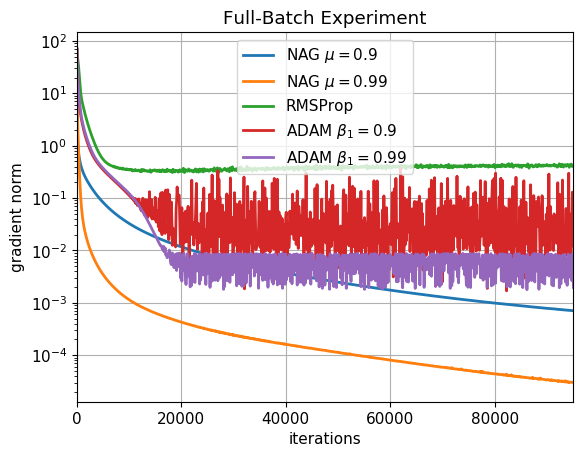}
\caption{1 hidden layer; 1000 nodes}
\end{subfigure}
\begin{subfigure}[t]{0.32\textwidth}
\includegraphics[width=\textwidth]{final_images/1000_1000_grad_smooth.png}
\caption{3 hidden layers; 1000 nodes each}
\end{subfigure}
\begin{subfigure}[t]{0.32\textwidth}
\includegraphics[width=\textwidth]{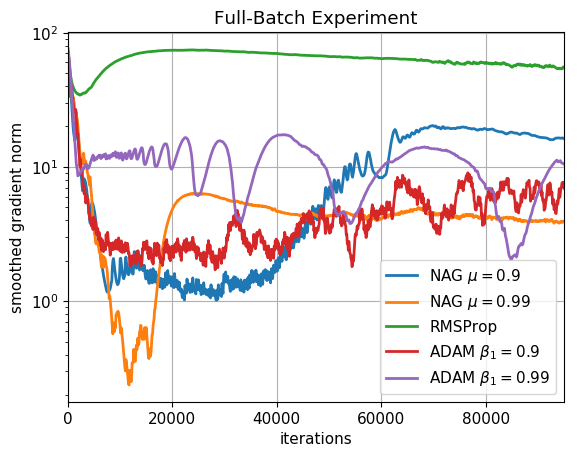}
\caption{5 hidden layers; 1000 nodes each}
\end{subfigure}
\begin{subfigure}[t]{0.32\textwidth}
\includegraphics[width=\textwidth]{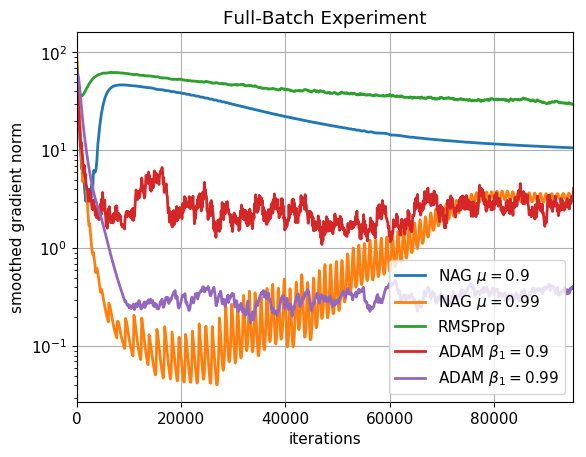}
\caption{3 hidden layers; 300 nodes}
\end{subfigure}
\begin{subfigure}[t]{0.32\textwidth}
\includegraphics[width=\textwidth]{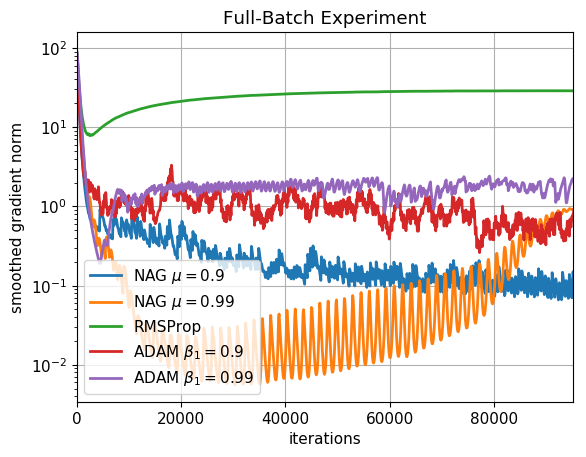}
\caption{3 hidden layer; 3000 nodes}
\end{subfigure}
\begin{subfigure}[t]{0.32\textwidth}
\includegraphics[width=\textwidth]{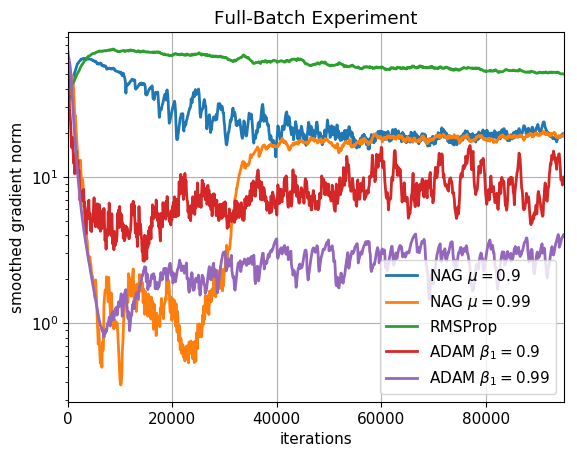}
\caption{5 hidden layer; 300 nodes}
\end{subfigure}
\caption{Norm of gradient on training set; Input image size $22\times 22$}
\label{full_batch_normgrad_imsize_22}
\end{figure}

\clearpage

\subsection{Are the full-batch results consistent across different input dimensions?}
\label{sec:supp_varydim}

To test whether our conclusions are consistent across different input dimensions, we do two experiments where we resize the $22\times 22$ MNIST image to $17\times 17$ and to $12\times 12$. Resizing is done using TensorFlow's 
\texttt{tf.image.resize\_images} 
method, which uses bilinear interpolation.

\subsubsection{Input images of size $17 \times 17$}

Figure \ref{dim_17} shows results on input images of size $17\times 17$ on a 3 layer network with 1000 hidden nodes in each layer. Our main results extend to this input dimension, where we see ADAM with $\beta_1 = 0.99$ both converging the fastest as well as generalizing the best, while NAG does better than ADAM with $\beta_1 = 0.9$.

\begin{figure}[h!]
\centering
\begin{subfigure}[t]{0.32\textwidth}
\includegraphics[width=\textwidth]{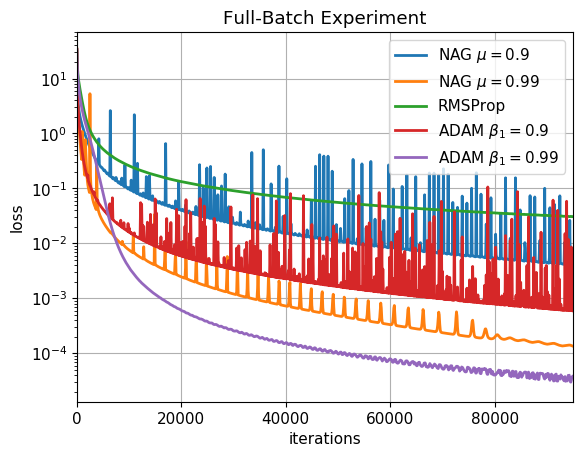}
\caption{Training loss}
\end{subfigure}
\begin{subfigure}[t]{0.32\textwidth}
\includegraphics[width=\textwidth]{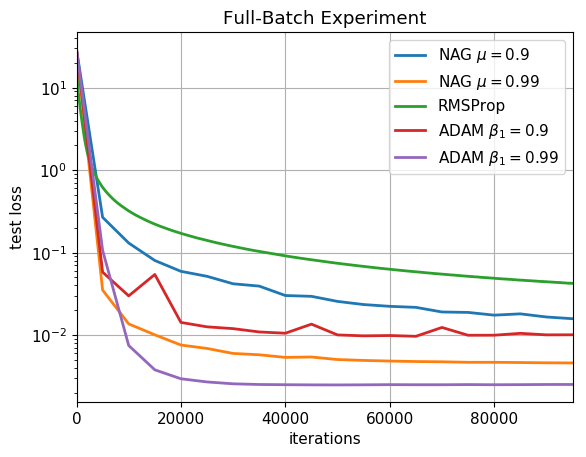}
\caption{Test loss}
\end{subfigure}
\begin{subfigure}[t]{0.32\textwidth}
\includegraphics[width=\textwidth]{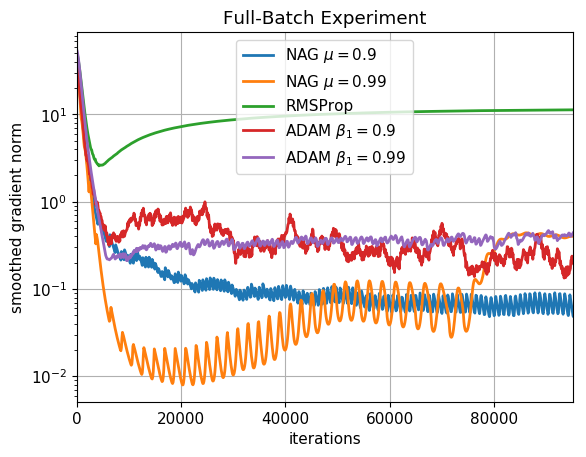}
\caption{Gradient norm}
\end{subfigure}
\caption{Full-batch experiments with input image size $17\times 17$}
\label{dim_17}
\end{figure}

\subsubsection{Input images of size $12 \times 12$}

Figure \ref{dim_12} shows results on input images of size $12\times 12$ on a 3 layer network with 1000 hidden nodes in each layer. Our main results extend to this input dimension as well. ADAM with $\beta_1 = 0.99$ converges the fastest as well as generalizes the best, while NAG does better than ADAM with $\beta_1 = 0.9$.

\begin{figure}[h!]
\centering
\begin{subfigure}[t]{0.32\textwidth}
\includegraphics[width=\textwidth]{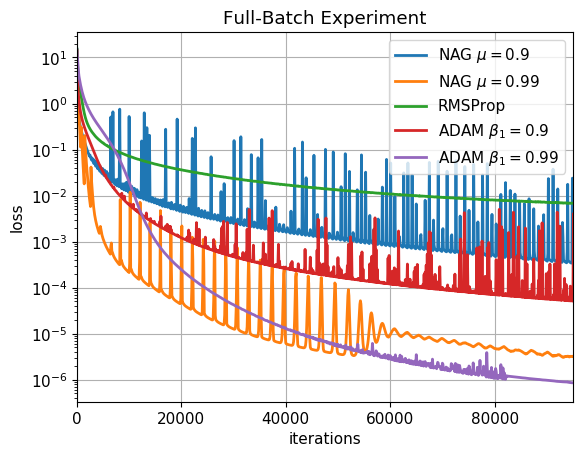}
\caption{Training loss}
\end{subfigure}
\begin{subfigure}[t]{0.32\textwidth}
\includegraphics[width=\textwidth]{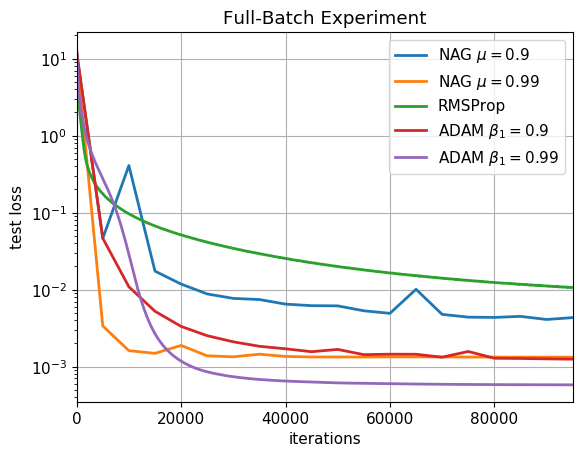}
\caption{Test loss}
\end{subfigure}
\begin{subfigure}[t]{0.32\textwidth}
\includegraphics[width=\textwidth]{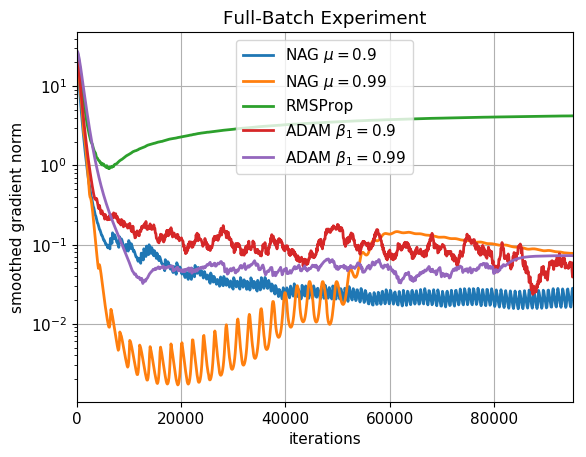}
\caption{Gradient norm}
\end{subfigure}
\caption{Full-batch experiments with input image size $12\times 12$}
\label{dim_12}
\end{figure}

\clearpage

\subsection{Additional mini-batch experiments on $22 \times 22$ sized images}
\label{sec:supp_minibatch}

In Figure \ref{minibatch_imsize_22}, we present results on additional neural net architectures on mini-batches of size 100 with an input dimension of $22 \times 22$. We see that most of our full-batch results extend to the mini-batch case.

\begin{figure}[h!]
\centering
\begin{subfigure}[t]{0.32\textwidth}
\includegraphics[width=\textwidth]{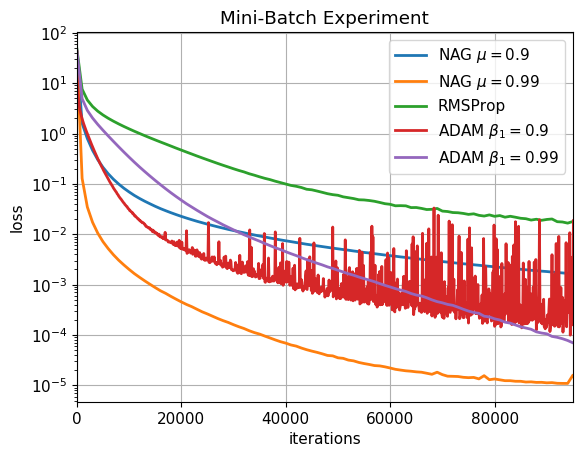}
\caption{1 hidden layer; 1000 nodes}
\end{subfigure}
\begin{subfigure}[t]{0.32\textwidth}
\includegraphics[width=\textwidth]{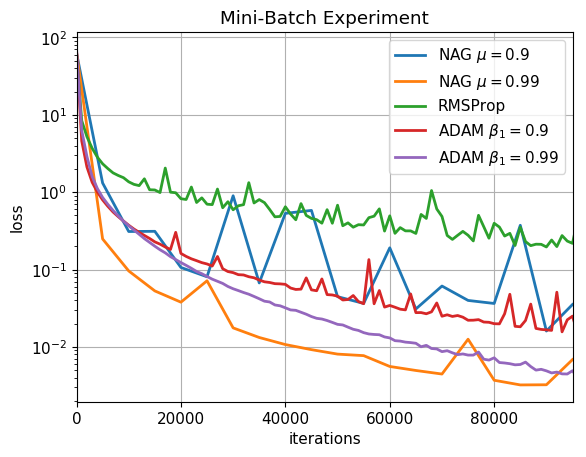}
\caption{3 hidden layers; 1000 nodes each}
\end{subfigure}
\begin{subfigure}[t]{0.32\textwidth}
\includegraphics[width=\textwidth]{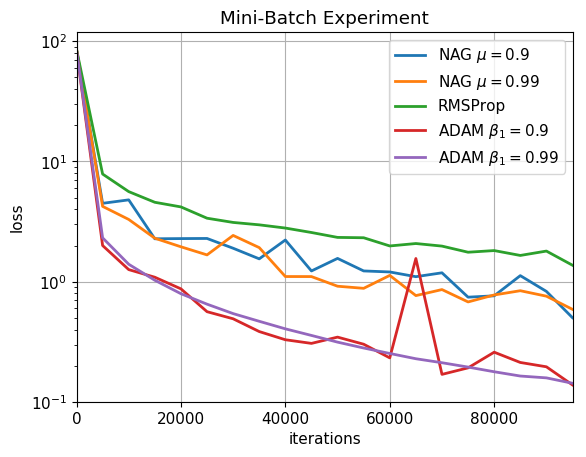}
\caption{9 hidden layers; 1000 nodes each}
\end{subfigure}
\begin{subfigure}[t]{0.32\textwidth}
\includegraphics[width=\textwidth]{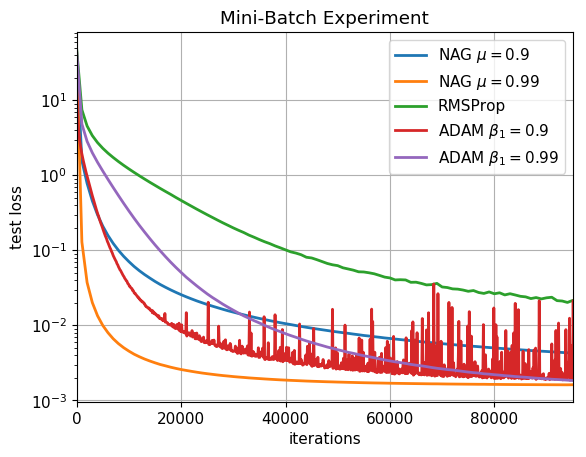}
\caption{1 hidden layer; 1000 nodes}
\end{subfigure}
\begin{subfigure}[t]{0.32\textwidth}
\includegraphics[width=\textwidth]{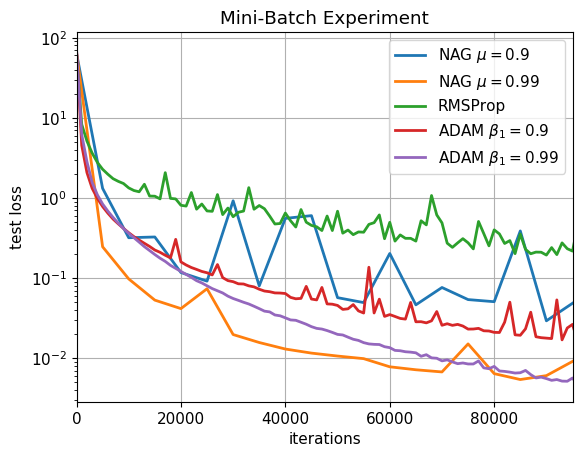}
\caption{3 hidden layers; 1000 nodes each}
\end{subfigure}
\begin{subfigure}[t]{0.32\textwidth}
\includegraphics[width=\textwidth]{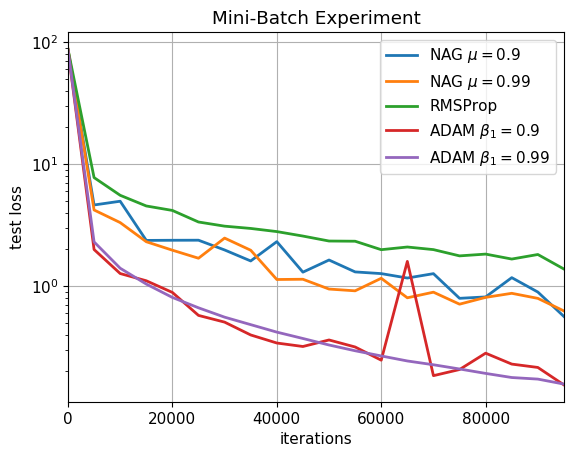}
\caption{9 hidden layers; 1000 nodes each}
\end{subfigure}
\begin{subfigure}[t]{0.32\textwidth}
\includegraphics[width=\textwidth]{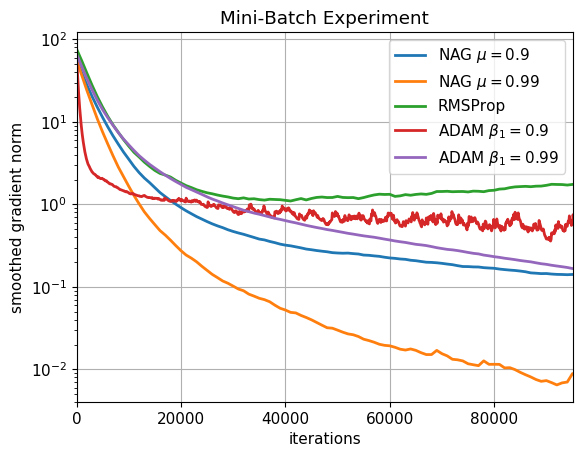}
\caption{1 hidden layer; 1000 nodes}
\end{subfigure}
\begin{subfigure}[t]{0.32\textwidth}
\includegraphics[width=\textwidth]{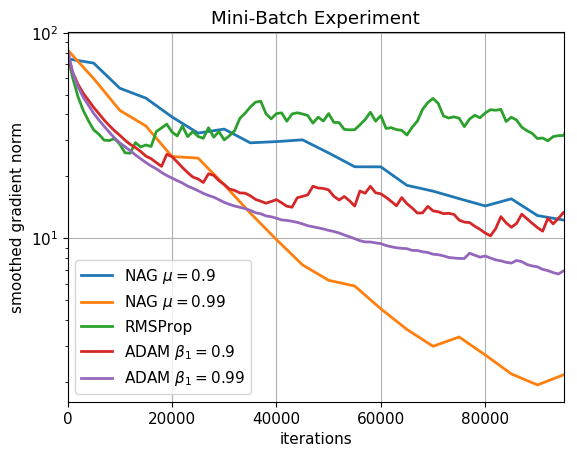}
\caption{3 hidden layers; 1000 nodes each}
\end{subfigure}
\begin{subfigure}[t]{0.32\textwidth}
\includegraphics[width=\textwidth]{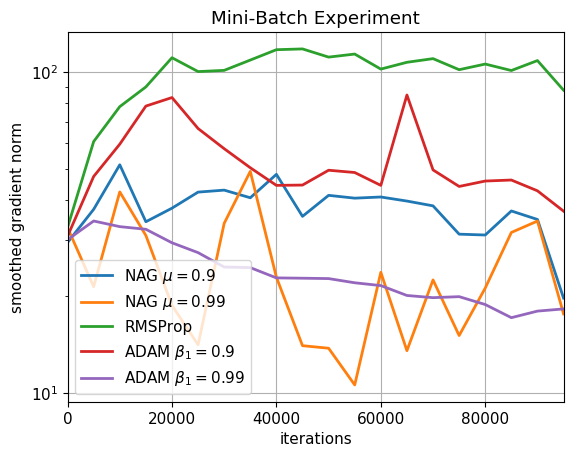}
\caption{9 hidden layers; 1000 nodes each}
\end{subfigure}
\caption{Experiments on various networks with mini-batch size 100 on full MNIST dataset with input image size $22\times 22$. First row shows the loss on the full training set, middle row shows the loss on the test set, and bottom row shows the norm of the gradient on the training set.}
\label{minibatch_imsize_22}
\end{figure}

\clearpage


\OMIT{
\section{Additional comments : To be removed later!}

These experiments aim to explore following questions:
\begin{itemize}
\item How do the different accelerated {\color{red} and adaptive} optimization methods converge to a stationary point? (\bf{\color{blue} How do we determine a stopping criteria?}) 

\item {\color{red} And whenever on a common experiment an approximate stationarity is reached by everyone i.e ADAM, RMSProp, SGD-HB or NAG, how do the $\lambda_{min}(Hessian)$s compare for the respective stopping points?}

\item {\color{red} For any fixed data set we want to compare the dependence of training error at the stopping point, generalization error and $\lambda_{min}(Hessian)$ at that point, on the neural architecture  for the different optimization methods.} {\color{blue}({\bf In particular we want to understand the dependence of these stopping point characteristics on depth at different values of fixed (uniform) width and input size : which seems to be the largely unexplored territory in experiments till date!})} {\bf Most commonly done experiments seem to be designed such that the widths and the depths increase along with increasing input sizes and hence making it difficult to disentangle the different effects.}
\item {\color{red} {\bf {\color{blue}THE MAIN TARGET!}} Define two small thresholding constants say $\epsilon_g, \epsilon_H \in (0,1)$ (...Choose them by looking at the typical trends in the experiment! Maybe its helpful to have $\epsilon_g = O(\epsilon_H^2)$...) Now start a counter every time an optimization algorithm say RMSProp hits a point s.t $\Vert \nabla F \Vert_2 \leq \epsilon_g$ and $\lambda_{min}(Hessian) < -\epsilon_H$. Now count how many steps does it take for the algorithm to find a point from here on where we have, $\Vert \nabla F \Vert_2 \leq \epsilon_g$ but $\lambda_{min}(Hessian) > -\epsilon_H$ i.e it has escaped a ``bad approximate saddle'' to find a ``good approximate saddle''. NOW if such a transition does happen then check if the generalization error at the good approximate saddle is lower (how much?) than at the bad approximate saddle. Hopefully we can establish this behaviour consistently across many architectures and adaptive or momentum based gradient optimization methods.}
\end{itemize}

Note that the run that achieves the best training loss might be different from the run that achieves the best test performance, and to distinguish this when different, we plot both of those runs using \texttt{*\_Train} and \texttt{*\_Test}.
\emph{Note: we are cheating when we report the run with the best loss on the test set, since technically one is not supposed to tune on the test set. While I don't expect the result to change much, I will rerun these experiments using a held-out validation set.}

\section{A plan for the paper: to be removed later}

\begin{enumerate}
\item  Introduction is a survey of the existing adaptive methods and a discussion of the "marginal" paper and the a discussion of the issue with AMSGRAD. {\bf Give detailed references to establish why ADAM and RMSProp are widely used and say that NAG's performance on nets hasnt been given enough attention and that to the best of our knowledge there is no proof known as to why adaptive gradients should even find criticality on non-convex objectives!} Here we also give the pseudocode for the TensorFlow versions of RMSProp and ADAM.
\item Then we give a theorem for convergence to criticality on smooth non-convex objectives for ADAM and RMSProp. {\bf The full proof will most likely go in the Appendix.}
\item ({\bf Hopefully!}) We can give a convergence proof for stochastic RMSProp and ADAM!(along the lines of Anima's signSGD paper)
\item ({\bf Ambitiously!}) Can we show that ADAM gets lower function values than RMSProp? Or show hard examples for them? 
\item Then we show experimental evidence with full batch training of autoencoders on MNIST to establish the following facts, 
\begin{itemize}
\item Except for one example (depth 4 with 1000 nodes each) we seem to see that NAG always has better generalization although it never has the best training errors. 
\item For all cases it seems ADAM gets the best training errors. 
\item The case of $1$ hidden layer seems to be special for RMSProp since here is the only example we could find where it has better generalization ability than ADAM. 
\item For large nets it seems RMSProp simply fails to generalize at all. 
\end{itemize}
\item Then we show experiments to investigate the ability of the different optimizers in reaching critical points for this set up. We show that,
\begin{itemize}
\item ADAM seems to always be able to drastically reduce the gradient norm. 
\item NAG does not seem to be able to get to criticality for large nets. 
\item The  above experiments were done at $\xi = 10^{-10}$. On increasing this to $10^{-4}$ it seems that RMSProp is hugely helped in reducing gradient norms. ADAM also gets some help in this regard from this increase. But increasing $\xi$ hurts both ADAM as well as RMSProp's ability to lower either training or test loss. {\bf Here we try to show if the needed step-size change scales with $\xi$ in the manner predicted from theory or not!}
~\\  \\
{\bf Hence we feel that there is some inherent tension between an algorithm's ability to find critical points and its ability to generalize well. ADAM seems to be somehow in a sweet spot!}
\end{itemize}
\item Then we show experiments to track (a) time it takes for ADAM and RMSPRop and NAG to reach the first good saddle (approximate local minima)  and (b) the time it takes for any of them to sufficiently increase the gradient norm after any of them finds a bad saddle. {\bf Are these time scales varying polynomially or polylogarithmically in the input dimension? And how do they scale with epsilon?}
\end{enumerate}
}

\section{Image Classification on Convolutional Neural Nets}
\label{vgg9_sec}

\begin{figure}[h!]
\centering
\begin{subfigure}[t]{0.4\textwidth}
\includegraphics[width=\textwidth]{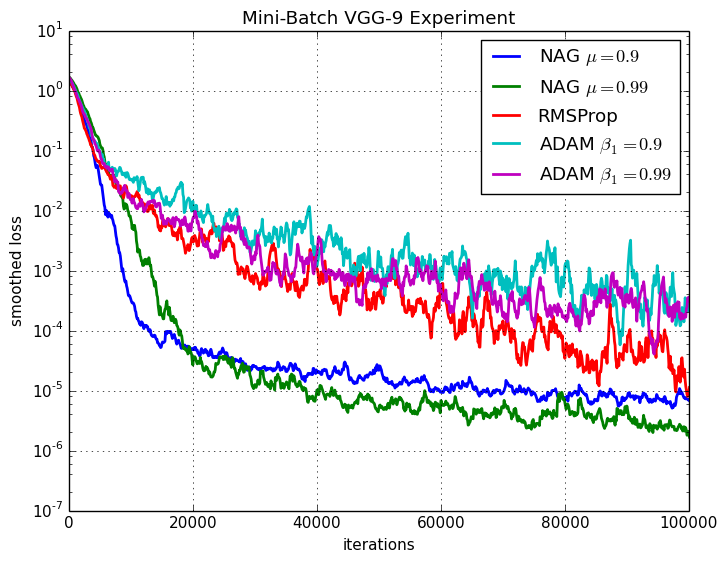}
\caption{Training loss}
\end{subfigure}
\begin{subfigure}[t]{0.4\textwidth}
\includegraphics[width=\textwidth]{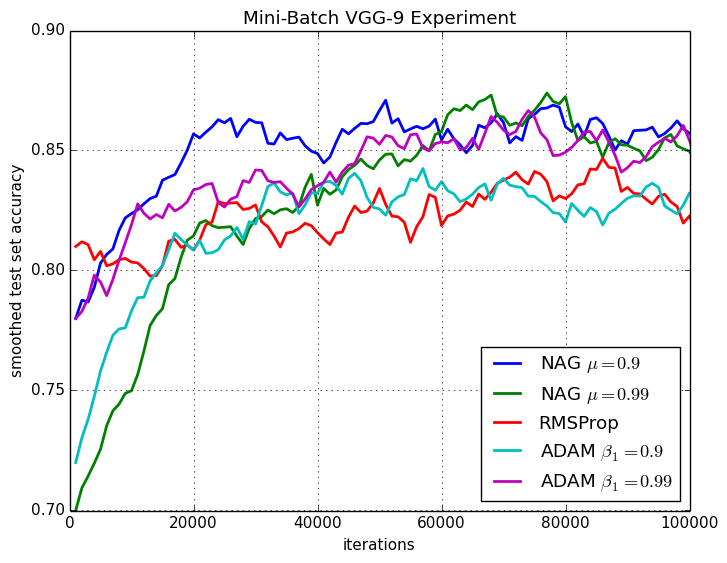}
\caption{Test set accuracy}
\end{subfigure}
\caption{Mini-batch image classification experiments with CIFAR-10 using VGG-9}
\label{vgg9}
\end{figure}

To test whether these results might qualitatively hold for other datasets and models, we train an image classifier on CIFAR-10 (containing 10 classes) using VGG-like convolutional neural networks \cite{simonyan2014very}. In particular, we train VGG-9 on CIFAR-10, which contains 7 convolutional layers and 2 fully connected layers, a total of 9 layers. The convolutional layers contain 64, 64, 128, 128, 256, 256, 256 filters each of size $3 \times 3$, respectively. We use batch normalization \cite{ioffe2015batch} and ReLU activations after each convolutional layer, and the first fully connected layer. Table \ref{tab:vgg9} contains more details of the VGG-9 architecture. We use minibatches of size 100, and weight decay of $10^{-5}$. We use fixed step sizes, and all hyperparameters were tuned as indicated in Section \ref{supp_sec:exp_details}.

We present results in Figure \ref{vgg9}. As before, we see that this task is another example where tuning the momentum parameter ($\beta_1$) of ADAM helps. While attaining approximately the same loss value, ADAM with $\beta_1 = 0.99$ generalizes as good as NAG and better than when $\beta_1 = 0.9$. Thus tuning $\beta_1$ of ADAM helped in closing the generalization gap with NAG.

\begin{table}[h]
\centering
\small
\caption{VGG-9 on CIFAR-10.}
\label{tab:vgg9}
\begin{tabular}{l|ccc}
\hline
layer type  & kernel size & input size              & output size \\ \hline
Conv\_1     & $3 \times 3$    & $~~3~~ \times 32  \times 32$  & $~64~  \times 32 \times 32$       \\
Conv\_2     & $3 \times 3$    & $~64~ \times 32  \times 32$  & $~64~  \times 32 \times 32$       \\
Max Pooling & $2 \times 2$    & $~64~ \times 32  \times 32$  & $~64~  \times 16 \times 16$       \\\hline
Conv\_3     & $3 \times 3$    & $~64~ \times 16  \times 16$  & $128 \times 16 \times 16$      \\
Conv\_4     & $3 \times 3$    & $128 \times 16  \times 16$  & $128 \times 16 \times 16$      \\
Max Pooling & $2 \times 2$    & $128 \times 16  \times 16$  & $128 \times ~8~ \times ~8~$      \\\hline
Conv\_5     & $3 \times 3$    & $128 \times ~8~   \times ~8~$   & $256 \times ~8~ \times ~8~$      \\
Conv\_6     & $3 \times 3$    & $256 \times ~8~   \times ~8~$   & $256 \times ~8~ \times ~8~$      \\
Conv\_7     & $3 \times 3$    & $256 \times ~8~   \times ~8~$   & $256 \times ~8~ \times ~8~$      \\
Max Pooling & $2 \times 2$    & $256 \times ~8~   \times ~8~$   & $256 \times ~4~ \times ~4~$ \\\hline
Linear      & $1 \times 1$    & $~1 \times 4096~~$           & $~1 \times 256~~~$      \\
Linear      & $1 \times 1$    & $~1 \times ~256~~~$            & $~1 \times ~10~~~~$       \\
\hline
\end{tabular}
\end{table}

\end{document}